\newtheorem{theorem}{Theorem}
\newtheorem{lemma}{Lemma}
\newtheorem{Definition}{Definition}
\newtheorem{property}{Property}
\begin{document}
%
\title{Orthogonal Nonnegative Tucker Decomposition}
%
%
\date{}
\author{Junjun Pan,
        Michael K. Ng,
        Ye Liu,
        Xiongjun Zhang,
         Hong Yan
       \thanks{J. Pan is with Department of Mathematics and Operational Research
Facult\'e polytechnique, Universit\'e de Mons, 7000, Belgium. (e-mail: Junjun.PAN@umons.ac.be).}
\thanks{ M. K. Ng, and Y. Liu are with the Department of Mathematics, The University of Hong Kong, Pokfulam, Hong Kong.
(e-mail:   mng@math.hku.hk; 16482549@life.hkbu.edu.hk).}
\thanks{X. Zhang is with the School of Mathematics and Statistics and Hubei Key Laboratory of Mathematical Sciences,
Central China Normal University, Wuhan 430079, China. (e-mail: xjzhang@mail.ccnu.edu.cn).}
\thanks{H. Yan is with the Department of Electronic Engineering, City University of Hong Kong, Kowloon, Hong Kong (e-mail:h.yan@cityu.edu.hk).}}

\maketitle

\begin{abstract}
In this paper, we study the nonnegative tensor data and propose an orthogonal nonnegative Tucker decomposition (ONTD).   We discuss some properties of ONTD and develop a convex relaxation algorithm of the augmented Lagrangian function to solve the optimization problem.  The convergence of the algorithm is given. We employ ONTD on the image data sets from the real world applications including face recognition, image representation, hyperspectral unmixing. Numerical results are shown to illustrate the effectiveness of the proposed algorithm.
\end{abstract}

\textbf{Keywords.}
 nonnegative tensor, Tucker decomposition, image processing

%

\section{Introduction}

Given a nonnegative matrix $\mathbf{A}\in\mathbb{R}^{m\times n}_+$ and integer $r$, nonnegative matrix factorization (NMF) is the problem of searching for basic matrix $\mathbf{U}\in \mathbb{R}^{m\times r}_+$ and coefficient matrix $\mathbf{V}\in \mathbb{R}^{r\times n}_+$ such that $\mathbf{A}\approx\mathbf{ UV}$. In many data analysis problems, the columns of $\mathbf{A}$ are corresponding to data points, for instance, images of pixel intensities. NMF has been successfully applied into many fields including image processing, text data mining and so on. It has been demonstrated that NMF is a powerful technique for dimension reduction. Compared to other well-known method, like singular value decomposition or principal component analysis, NMF is able to give more interpretable results due to its combinations of nonnegative basic vectors.

In general, NMF is NP-hard and the solution is not unique. It is necessary to impose additional constraints on the factor matrix like orthogonality constraints. Precisely, given $\mathbf{A}\in\mathbb{R}^{m\times n}_+$, solve
$$
\min_{\mathbf{U}\in \mathbb{R}^{m\times r}_+,\mathbf{V}\in \mathbb{R}^{r\times n}_+}\|\mathbf{A}-\mathbf{U}\mathbf{V}\|^2_F,\quad \mbox{s.t.}\quad \mathbf{U}^T\mathbf{U}=\mathbf{I}.
$$
We call the above problem orthogonal nonnegative matrix factorization(ONMF). The orthogonal constraints guarantee the uniqueness of the solution. There are many methods \cite{Choi08,Pomp14,Ding06} and most are the multiplicative update algorithms derived from NMF. Until recently, Pan and Ng \cite{Pan18} investigated the properties of ONMF and present a new method called SN-ONMF for finding the factorization. They used the sparsity and nuclear norm optimization to solve ONMF problem.

The orthogonality constraints make sense in many practical applications. In \cite{Ding06}, the equivalence of ONMF problem and K-means clustering has been well discussed. In document classification, each entry $A(i,j)$ indicates the importance of word $i$ in document $j$. Each row of data matrix stands for a document, each column stands for a word.  $\mathbf{U}$ is the document cluster indicator matrix in ONMF model, in other words, ONMF aims to find a document clustering space $\mathbf{U}$, and the coefficient matrix $\mathbf {V}$ can be obtained by projecting the data onto $\mathbf{U}$. In addition to understanding the cluster of documents, one may also require the cluster  of words. Considering this problem, Ding et al proposed the following nonnegative tri-factor decomposition in \cite{Ding06},
\begin{equation}\label{trionmf}
\min_{\mathbf{U}\in \mathbb{R}^{m\times r_1}_+ ,\mathbf{S}\in \mathbb{R}^{r_1\times r_2}, \mathbf{V}\in \mathbb{R}^{r_2\times n}_+}\|\mathbf{A}-\mathbf{U}\mathbf{S}\mathbf{V}\|^2_F,\quad \mbox{s.t.}\quad  \mathbf{U}^T\mathbf{U}=\mathbf{I},\quad \mathbf{V}^T\mathbf{V}=\mathbf{I}.
\end{equation}
$\mathbf{U}$ provides row clusters and $\mathbf{V}$ provides column clusters. For coefficient matrix $\mathbf{S}$, each entry $s_{i,j}$ can be regarded as the connection weight between column cluster $i$ and row cluster $j$.

Nowadays, data that comes from many fields are more naturally represented as multidimensional data which refers to tensor, for example, video data, hyperspectral data, fMRI data and so on. In this paper, we generalize  model (\ref{trionmf}) to tensor data, i.e., given a nonnegative tensor $\mathcal{A}\in \mathbb{R}^{I_1\times I_2\times \cdots\times I_d}_+$ and the integer rank $(J_1,J_2,\ldots, J_d)$, solve
\begin{equation}\label{e1}
\begin{split}
\min& \ \|\mathcal{A}-\mathcal{S}\times_1 \mathbf{U}^{(1)}\times_2 \mathbf{U}^{(2)}\cdots \times_d \mathbf{U}^{(d)}\|^2_F\\
\mbox{s.t.}&\ \mathcal{S}\in \mathbb{R}^{J_1\times J_2\times\cdots\times J_d}_+, ~\mathbf{U}^{(n)}\in \mathbb{R}^{I_n\times J_n}_+,~\mathbf{U}^{(n)T}\mathbf{U}^{(n)}=\mathbf{I},~n=1,2,\ldots,d.
\end{split}
\end{equation}
where $\times_n$ denotes the mode-$n$ matrix product of a tensor defined by
$$
(\mathcal{S}\times_n \mathbf{U}^{(n)})_{j_1\cdots j_{n-1}i_{n}j_{n+1}\cdots j_d}=\sum^{J_n}_{j_n=1}s_{j_1\cdots j_{n-1}j_nj_{n+1}\cdots j_d}u^{(n)}_{i_n,j_n}.
$$
For simplicity, we call the above model orthogonal nonnegative Tucker decomposition (ONTD) model. $\mathbf{U}^{(n)}$ gives the clusters of the $n$-th dimension. Each entry $s_{j_1,j_2,\cdots,j_d}$ represents the joint connection weight of the corresponding cluster along dimensions from $1$ to $d$. If some factor matrices are equal to identity matrix $\mathbf{I}$, we call the model partial ONTD model.

The ONTD model makes sense in several applications. In image classification, for some image sequences containing different illuminations, motions and subjects, ONTD model gives illumination clusters, motion clusters and subject clusters. We can also know the connection weight between them. For video data sets which contains different types of human actions, scenarios and different subjects, ONTD model helps us to know the clusters of actions, scenarios and subjects. If one only consider the clusters of actions and subjects, he can use partial ONTD model, i.e., set the scenarios factor matrix $\mathbf{U}$ to be identity matrix.

ONTD model not only helps to keep the inherent tensor structure but also well performs in data compression. As we all know that tensor data need huge storages due to high dimensionality. While in ONTD, only $(J_1\cdots J_d+\sum^d_{n=1}I_nJ_n)$ memories are required for the tensor of $(I_1,\cdots, I_d)$. It would save a lot of memory compared with the original storage $(I_1I_2\cdots I_d)$, especially when $(J_1,\cdots,J_d)$ is small.

ONTD model is related to multilinear singular value decomposition (HOSVD) which enforces the factor matrices in Tucker decomposition into orthogonal matrix and is well discussed in \cite{DeLa00best,DeLa00}. Higher-order orthogonal iteration (HOOI) in \cite{DeLa00best} is proposed to find the best rank $(J_1,J_2,\cdots,J_d)$ approximation of tensor $\mathcal{A}$. The difference between HOSVD and our model is the nonnegativity of factor matrices are imposed in ONTD model, which lead to easily interpret. In \cite{kim2007nonnegative}, Kim and Choi developed nonnegative Tucker decomposition (NTD) while they did not consider the orthogonality on the factor matrices. ONTD model takes advantages from both orthogonality and nonnegativity constraints. We can get the clustering information from the factor matrices $\{U^{(i)}\}^d_{i=1} $ and their joint connection weight from the core tensor $\mathcal{S}$ at the same time.

To solve the model (\ref{e1}), we first discuss the properties of nonnegative orthogonal Tucker format tensor, i.e., $\mathcal{A}=\mathcal{S}\times_1 \mathbf{U}^{(1)}\times_2 \mathbf{U}^{(2)}\cdots \times_d \mathbf{U}^{(d)}$ where $\{\mathbf{U}^{(n)}\}^d_{n=1}$ are orthogonal nonnegative matrices. We utilize the properties of nonnegative orthonormal factor matrix and present a structured convex optimization algorithm. The convergence of the algorithm will be discussed and shown. The proposed ONTD method is then applied to face recognition, image representation and hyperspectral unmixing problems. Numerical results demonstrate a good performance of our model. We summarize the main contributions of this paper as follows.

1) We propose an orthogonal nonnegative Tucker decomposition model (ONTD) which projects the tensor objects into the tensor with small size for dimension reduction and preserves the structure information as well.

2) We present some properties of ONTD model, develop a structured convex optimization algorithm and analysis the convergence.

3) Numerical examples  from the real world applications have been conducted to demonstrate the effectiveness of the proposed method. The results show that the ONTD outperforms existing methods such as PCA, NMF and NTD.

The rest of the paper is organized as follows. In Section 2, we present several  properties of orthogonal nonnegative tensors. In Section 3  we propose an optimization model and present an algorithm. Meanwhile, the convergence is discussed. In Section 4, we apply the algorithm on real data sets from face recognition, image representation and hyperspectral unmixing problem. Their numerical results are shown. The concluding remarks are given in Section 5.
\section{Properties of Nonnegative Orthogonal Tucker Tensor}
Given a nonnegative orthogonal Tucker tensor $\mathcal{A}\in \mathbb{R}^{I_1\times \cdots\times I_d}$,
\begin{equation}\label{NOTT}
\mathcal{A}=\mathcal{S}\times_1 \mathbf{U}^{(1)}\times_2 \mathbf{U}^{(2)}\cdots \times_d \mathbf{U}^{(d)}
\end{equation}
$\{\mathbf{U}^{(n)}\in \mathbb{R}^{I_n\times J_n}\}^d_{n=1}$ are orthogonal nonnegative matrices, $(J_1,J_2,\cdots,J_n)$ refers to as multilinear rank. We first introduce the following two properties of  orthogonal nonnegative matrix that from \cite{Pan18}.

\begin{lemma}\label{leOrg}
For $n=1,2,\ldots,d$, each row of ${\bf U}^{(n)}$ has at most one nonzero element.
\end{lemma}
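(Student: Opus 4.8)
The plan is to exploit the orthonormality of the \emph{columns} of $\mathbf{U}^{(n)}$ together with the sign restriction coming from nonnegativity. Write $\mathbf{U}^{(n)}=(u^{(n)}_{i,j})$ with every $u^{(n)}_{i,j}\geq 0$, and denote its columns by $\mathbf{u}_1,\ldots,\mathbf{u}_{J_n}$. The constraint $\mathbf{U}^{(n)T}\mathbf{U}^{(n)}=\mathbf{I}$ says precisely that these columns are orthonormal; in particular, for any pair of distinct column indices $j\neq k$ the corresponding off-diagonal entry of $\mathbf{U}^{(n)T}\mathbf{U}^{(n)}$ yields
$$
\langle \mathbf{u}_j,\mathbf{u}_k\rangle=\sum_{i=1}^{I_n}u^{(n)}_{i,j}\,u^{(n)}_{i,k}=0.
$$

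First I would observe that each summand in this inner product is a product of two nonnegative numbers, hence $u^{(n)}_{i,j}\,u^{(n)}_{i,k}\geq 0$. A sum of nonnegative reals can vanish only if every term vanishes, so the displayed equation forces $u^{(n)}_{i,j}\,u^{(n)}_{i,k}=0$ for every row index $i$ and every pair $j\neq k$.

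The conclusion then follows by contradiction at the level of a single row. Fix a row $i$ and suppose two distinct entries $u^{(n)}_{i,j}$ and $u^{(n)}_{i,k}$ with $j\neq k$ were both strictly positive; their product would be strictly positive, contradicting the identity just established. Hence in each row at most one entry can be nonzero, which is the claim. I do not anticipate any genuine obstacle here: the argument is elementary once the orthogonality condition is read column-wise, and the only point requiring slight care is to apply orthogonality to the columns of $\mathbf{U}^{(n)}$, which are the vectors actually constrained by $\mathbf{U}^{(n)T}\mathbf{U}^{(n)}=\mathbf{I}$, rather than to its rows.
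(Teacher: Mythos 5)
Your proof is correct and complete: reading $\mathbf{U}^{(n)T}\mathbf{U}^{(n)}=\mathbf{I}$ column-wise, using nonnegativity to force every term of the vanishing inner product $\sum_i u^{(n)}_{i,j}u^{(n)}_{i,k}$ to be zero, and concluding row by row is exactly the standard argument. The paper itself states this lemma without proof, citing it as a known property of orthogonal nonnegative matrices from \cite{Pan18}, and your argument is precisely the one used there, so there is nothing to add.
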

\begin{lemma}\label{crin}
For $n=1,2,\ldots,d$, ${\bf K}^{(n)} = {\bf U}^{(n)} {\bf U}^{(n)T}$ satisfies
$$
{\bf K}^{(n)T}={\bf K}^{(n)} ~{\rm and} ~({\bf K}^{(n)})^2={\bf K}^{(n)}, ~ {\rm with} ~~ 0\leq k^{(n)}_{i,j}\leq 1.
$$
In addition, the trace of ${\bf K}^{(n)}$ is equal to $J_n$, i.e., $tr({\bf K}^{(n)})=J_n$ and $\| {\bf K}^{(n)} \|_1 \le I_n$, where $I_n$ and $J_n$ are the dimensions of ${\bf U}^{(n)}$. Moreover, 1 is the $J_n$ repeated eigenvalues of ${\bf K}^{(n)}$ and  the columns of ${\bf U}^{(n)}$ are the corresponding eigenvectors.
\end{lemma}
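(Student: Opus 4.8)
The plan is to verify each assertion in turn, relying on three ingredients: the orthonormality constraint ${\bf U}^{(n)T}{\bf U}^{(n)}={\bf I}$ built into the ONTD model, the entrywise nonnegativity of ${\bf U}^{(n)}$, and the at-most-one-nonzero-per-row structure supplied by Lemma \ref{leOrg}. First I would dispatch the algebraic identities. Symmetry is immediate, since $({\bf U}^{(n)}{\bf U}^{(n)T})^T={\bf U}^{(n)}{\bf U}^{(n)T}$, and idempotency follows by inserting the orthonormality relation: $({\bf K}^{(n)})^2={\bf U}^{(n)}({\bf U}^{(n)T}{\bf U}^{(n)}){\bf U}^{(n)T}={\bf U}^{(n)}{\bf U}^{(n)T}={\bf K}^{(n)}$. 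The trace identity is equally direct from the cyclic property of the trace, $tr({\bf K}^{(n)})=tr({\bf U}^{(n)T}{\bf U}^{(n)})=tr({\bf I}_{J_n})=J_n$.

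Next I would establish the entrywise bounds and the $\ell_1$ bound, which are where Lemma \ref{leOrg} does the real work. Nonnegativity $k^{(n)}_{i,j}\ge 0$ is clear because ${\bf K}^{(n)}$ is a product of nonnegative matrices. Writing $k^{(n)}_{i,j}=\sum_{l} u^{(n)}_{i,l}u^{(n)}_{j,l}$ and invoking Lemma \ref{leOrg}, so that rows $i$ and $j$ each carry at most one nonzero entry, the sum collapses to a single product (or is zero); since orthonormality forces each column of ${\bf U}^{(n)}$ to have unit Euclidean norm, every entry obeys $0\le u^{(n)}_{i,l}\le 1$, whence $k^{(n)}_{i,j}\in[0,1]$. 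For the $\ell_1$ bound I would compute $\|{\bf K}^{(n)}\|_1=\sum_{i,j}k^{(n)}_{i,j}=\sum_l\bigl(\sum_i u^{(n)}_{i,l}\bigr)^2$, apply Cauchy--Schwarz columnwise to obtain $\bigl(\sum_i u^{(n)}_{i,l}\bigr)^2\le n_l$ with $n_l$ the number of nonzeros in column $l$, and then note that $\sum_l n_l$ is the total number of nonzero entries of ${\bf U}^{(n)}$, which Lemma \ref{leOrg} caps by the row count $I_n$; hence $\|{\bf K}^{(n)}\|_1\le I_n$.

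Finally, the spectral statement follows because a symmetric idempotent matrix is an orthogonal projector, whose eigenvalues are all $0$ or $1$ and whose multiplicity of the eigenvalue $1$ equals its rank, and therefore its trace $J_n$. To identify the eigenvectors explicitly I would observe ${\bf K}^{(n)}{\bf U}^{(n)}={\bf U}^{(n)}({\bf U}^{(n)T}{\bf U}^{(n)})={\bf U}^{(n)}$, so every column of ${\bf U}^{(n)}$ is fixed by ${\bf K}^{(n)}$, and the $J_n$ orthonormal columns span the eigenspace for eigenvalue $1$. I do not expect any single step to be a genuine obstacle; the only care needed is the bookkeeping in the two norm bounds, where Lemma \ref{leOrg} must be used to reduce the sums to single terms and to count nonzeros, while the identities and the spectral claim are routine linear algebra.
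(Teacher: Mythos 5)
Your proof is correct in every step: the symmetry, idempotency, and trace identities follow from orthonormality exactly as you write them; the entrywise bound $0\le k^{(n)}_{i,j}\le 1$ and the bound $\|{\bf K}^{(n)}\|_1\le I_n$ are the only places where real work is needed, and your use of Lemma \ref{leOrg} to collapse $\sum_l u^{(n)}_{i,l}u^{(n)}_{j,l}$ to a single product, together with the columnwise Cauchy--Schwarz estimate $\bigl(\sum_i u^{(n)}_{i,l}\bigr)^2\le n_l$ and the count $\sum_l n_l\le I_n$, is sound. The one point of comparison to flag is that the paper itself gives no proof of this lemma at all: it is imported verbatim from Pan and Ng \cite{Pan18} with the remark that these are ``two properties of orthogonal nonnegative matrix that from \cite{Pan18}.'' So your write-up is a self-contained verification where the paper relies on a citation, which is a net gain; it also makes explicit that the projector structure (eigenvalue $1$ with multiplicity $J_n$, eigenvectors the columns of ${\bf U}^{(n)}$) needs nothing beyond symmetry plus idempotency, while the nonnegativity and sparsity of ${\bf U}^{(n)}$ are used only for the entrywise and $\ell_1$ bounds. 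No gaps.
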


From Lemma 1, we know $\mathbf{U}^{(n)}$ has the following structure:
\begin{equation}\label{Un}
\mathbf{U}^{(n)}=\Pi_r\left(
                        \begin{array}{cccc}
                          \mathbf{u}^{(n)}_1 & 0 & \cdots & 0 \\
                          0 & \mathbf{u}^{(n)}_2 & \cdots  & 0 \\
                          \vdots& \vdots &  \ddots & \vdots \\
                          0 & 0 & \cdots & \mathbf{u}^{(n)}_{J_n}\\
                        \end{array}
                      \right)\Pi_c, \quad\mathbf{u}^{(n)}_j=\left(
                                                                 \begin{array}{c}
                                                                   u^{(n)}_{1,j} \\
                                                                   u^{(n)}_{2,j} \\
                                                                   \vdots \\
                                                                   u^{(n)}_{l_j,j} \\
                                                                 \end{array}
                                                               \right),
\end{equation}
where $\quad j=1,\cdots,J_n,\quad n=1,2,\cdots, d$, and $\sum_jl_j=I_n$.  $\Pi_c$ and $\Pi_r$ are permutation matrix.  $\mathbf{U}^{(n)}$ can be regarded as class indicator matrix along $n$-th direction, $\mathbf{u}^{(n)}_j$ represents $j$-th class containing $l_j$ rows.

For tensor $\mathcal{A}$, its $n$-th unfolding $\mathbf{A}_{(n)}\in \mathbb{R}^{I_n\times (I_{n+1}\cdots I_{N}I_1\cdots I_{n-1})}$, follow (\ref{NOTT}), we can say that the $I_n$ rows of $\mathbf{A}_{(n)}$ can be classified into  $J_n$ classes. $\mathbf{U}^{(n)}$ is its corresponding class indicator matrix. For simplicity, denote $\mathbf{A}_{(n)}(t,:)$ as the $t$-th row of $\mathbf{A}_{(n)}$, and $\mathbf{A}_{(n)}(T_j,:)$ be the row set  belongs to $j$-th class, with the row index set $T_j$ of cardinality $l_j$,  $j\in \{1,\cdots J_n\}$.

Authors in \cite{Pan18} proved that the rows and columns of the same groups are proportional if matrix $\mathbf{A}$ is orthogonal decomposable, i.e., $\mathbf{A}=\mathbf{B}\mathbf{C}$ where $\mathbf{B}$ is orthogonal nonnegative matrix. Utilize this result, we give the similar property of $\mathcal{A}$ as follows.
\begin{property}
 Given a nonnegative orthogonal Tucker tensor $\mathcal{A}$, for $n$-th unfolding $\mathbf{A}_{(n)}$, $n=1,2,\ldots,d$, the rows belong to the same class are proportional.
\end{property}
\begin{proof}
From Lemma 1, we know that $\mathbf{U}^{(n)}\in \mathbb{R}^{I_n\times J_n}$ has at most one nonzero element in each row, i.e., in the $i$-th row,
$
u^{(n)}_{i,j}=
\left\{\begin{array}{cll}
u^{(n)}_{i,j^*}, & \mbox{if} \ j=j^*,\\
0, & \mbox{otherwise}.\\
\end{array}\right.
$  From Tucker format, we get that
\begin{equation}\label{aeq}
a_{i_1,i_2,\cdots, i_d}=\sum_{j_1,j_2,\cdots,j_d}s_{j_1,j_2,\cdots,j_d}u^{(1)}_{i_1,j_1}\cdots,u^{(d)}_{i_d,j_d}=s_{j^*_1,\cdots,j^*_d}u^{(1)}_{i_1,j^*_1}\cdots,u^{(d)}_{i_d,j^*_d}
\end{equation}
Fix all the index but $i_n$,
\begin{equation*}
\frac{a_{i_1,\cdots,i_{n-1},i^1_n,i_{n+1}\cdots i_d}}{a_{i_1,\cdots,i_{n-1},i^2_n,i_{n+1} \cdots i_d}}=\frac{s_{j^*_1,\cdots, j^{*1}_n\cdots j^*_d}u^{(n)}_{i^1_nj^{*1}_n}}{s_{j^*_1,\cdots, j^{*2}_n\cdots j^*_d}u^{(n)}_{i^2_nj^{*2}_n}},
\end{equation*}
if  $j^{*1}_n=j^{*2}_n$, then,
\begin{equation}\label{An}
\frac{a_{i_1,\cdots,i_{n-1},i^1_n,i_{n+1}\cdots i_d}}{a_{i_1,\cdots,i_{n-1},i^2_n,\cdots i_{n+1} i_d}}=\frac{u^{(n)}_{i^1_nj^{*1}_n}}{u^{(n)}_{i^2_nj^{*2}_n}},
\end{equation}
(\ref{An}) implies that $i^1_n$-th and $i^2_n$-th row of $\mathbf{A}_{(n)}$ are in the same class, and proportional. The result follows.
\end{proof}

\begin{property}
If $\mathcal{A}$ is a nonnegative orthogonal Tucker tensor, then its factor matrices $\{\mathbf{U}^{(n)}\}^d_{n=1}$ can be given explicitly by $\{\mathbf{A}_{(n)}\}^d_{n=1}$.
\end{property}
\begin{proof}
 For $U^{(n)}$ defined in (\ref{Un}), the $j$-th column $\mathbf{u}^{(n)}_j=(
                                                                 \begin{array}{cccc}
                                                                   u^{(n)}_{1,j} &  u^{(n)}_{2,j} &\cdots & u^{(n)}_{l_j,j}  \\                                                         \end{array}
                                                               )^T$ can be constructed in the following way. Without loss of generality, let
                                                               $u^{(n)}_{1,j}\neq 0$, from (\ref{An}),
$$
\frac{u^{(n)}_{t,j}}{u^{(n)}_{1,j}}=\frac{a_{i_1,\cdots, i_{n-1},t,i_{n+1},\cdots, i_d}}{a_{i_1,\cdots, i_{n-1},1,i_{n+1},\cdots, i_d}}\doteq \alpha_{t,j},
$$
where $t\in \{1,2,\cdots, l_j\}$, $j\in \{1,2,\cdots, J_n\}$. Because of $\mathbf{U}^{(n)T}\mathbf{U}^{(n)}=I$,  $\mathbf{u}^{(n)}_j$ can be constructed by letting
$$
u^{(n)}_{t,j}=\frac{\alpha_{t,j}}{\alpha_j},\quad ~\alpha_j=\sqrt{\sum^{l_j}_{t=1}\alpha^2_{t,j}}.
$$
\end{proof}

\begin{property}
  If $\mathcal{A}$ is a nonnegative orthogonal Tucker tensor, then for core tensor $\mathcal{S}$, the Frobenius norm of $j$-th row of $\mathbf{S}_{(n)}$ equals to that of $\mathbf{A}_{(n)}(T_j,:)$. More over, the Frobenius norm of $\mathcal{S}$ equals to that of $\mathcal{A}$.
\end{property}
\begin{proof}
For $\mathbf{A}_{(n)}$ whose rows are classified into $J_n$  classes, let $\{a_{i_1,\cdots, i_{n-1}, t,i_{n+1},\cdots,i_{d}}\}^{l_j}_{t=1}$ belong to the $j$-th class, from (\ref{aeq}),
  \begin{eqnarray}
  \notag
    \|\mathbf{A}_{(n)}(T_j,:)\|^2_F=&\sum_{i_1,\cdots,i_{n-1},t,i_{n+1},\cdots,i_d} a^2_{i_1,\cdots, i_{n-1}, t,i_{n+1},\cdots,i_{d}} \\
     \notag =&\sum_{i_1,\cdots,i_{n-1},t,i_{n+1},\cdots,i_d}(s_{j^*_1,\cdots,j^*_d}u_{i_1,j^*_1}\cdots u_{i_d,j^*_d})^2\\
      \notag
  =&\sum_{i_1,\cdots,i_{n-1},i_{n+1},\cdots,i_d}\sum_{t}(s^2_{j^*_1,\cdots,j^*_d}u^2_{t,j})u^2_{i_1,j^*_1}\cdots u^2_{i_{n-1},j^*_{n-1}}u^2_{i_{n+1},j^*_{n+1}}\cdots u^2_{i_d,j^*_d}\\
  =&\sum_{i_1,\cdots,i_{n-1},i_{n+1},\cdots,i_d} s^2_{j^*_1,\cdots,j^*_{n-1},j,j^*_{n+1},\cdots,j^*_d}u^2_{i_1,j^*_1}\cdots u^2_{i_{n-1},j^*_{n-1}}u^2_{i_{n+1},j^*_{n+1}}\cdots u^2_{i_d,j^*_d}\label{pro3_1}\\
   \notag
  =&\sum_{i_2,\cdots,i_{n-1},i_{n+1}\cdots i_d}(\sum_{i_1}s^2_{j^*_1,\cdots,j^*_{n-1},j,j^*_{n+1},\cdots,j^*_d} u^2_{i_1,j^*_1})\cdots u^2_{i_{n-1},j^*_{n-1}}u^2_{i_{n+1},j^*_{n+1}}\cdots u^2_{i_d,j^*_d}\\
  =&\sum_{i_2,\cdots,i_{n-1},i_{n+1}\cdots i_d}(\sum_{j^*_1}s^2_{j^*_1,\cdots,j^*_{n-1},j,j^*_{n+1},\cdots,j^*_d})\cdots u^2_{i_{n-1},j^*_{n-1}}u^2_{i_{n+1},j^*_{n+1}}\cdots u^2_{i_d,j^*_d}\label{pro3_2}\\
   \notag
  =&\cdots
  =\sum_{j^*_1,\cdots,j^*_{n-1}j^*_{n+1}\cdots j^*_{d}}s^2_{j^*_1,\cdots,j^*_{n-1},j,j^*_{n+1},\cdots,j^*_d}=\|\mathbf{S}_{(n)}(j,:)\|^2_F.
\end{eqnarray}
Because of $\mathbf{U}^{(n)T}\mathbf{U}^{(n)}=\mathbf{I}$, $\sum_tu^2_{t,j}=1$, thus the equality (\ref{pro3_1}) is established, moreover, since $i_1$ values from 1 to $I_1$, the corresponding $j^*$ hence goes through from 1 to $J_1$, equality (\ref{pro3_2}) holds.

 The Frobenius norm of $\mathcal{S}$ equals to that of $\mathcal{A}$ follows the summation from $j=1$ to $j=J_n$ of both sides of the above equality.
\end{proof}

\section{The Optimization Method}

In this section, we develop the optimization method for solving (\ref{e1}).
Equation (\ref{e1}) can be rewritten as follows:
\begin{equation}\label{e2}
\begin{split}
  \min& \ \|\mathbf{A}_{(n)}-\mathbf{U}^{(n)}\mathbf{S}_{(n)}(\mathbf{U}^{(n+1)}\otimes \mathbf{U}^{(n+2)}\cdots \otimes\mathbf{U}^{(d)}\otimes\mathbf{U}^{(1)}\otimes\mathbf{U}^{(2)} \cdots \otimes\mathbf{U}^{(n-1)})^{T}\|^2_F\\
   \mbox{s.t.} & \ \mathbf{U}^{(n)}\in \mathbb{R}^{I_n\times J_n}_+,~~\mathbf{U}^{(n)T}\mathbf{U}^{(n)}=\mathbf{I},\quad \mathbf{S}_{(n)}\in \mathbb{R}^{J_n\times J_{n+1}\cdots J_d J_1\cdots J_{n-1}}_+.
\end{split}
\end{equation}
where $\otimes$ denotes the Kronecker product. We remark that
(\ref{e2}) is valid for $1 \le n \le d$.
For simplicity,
we let
$$
\mathbf{W}^{(n)}=\mathbf{S}_{(n)}(\mathbf{U}^{(n+1)}\otimes \cdots
  \cdots \otimes\mathbf{U}^{(d)}\otimes\mathbf{U}^{(1)}\otimes \cdots \otimes\mathbf{U}^{(n-1)})^{T}.
  $$
Therefore, each factor matrix $\mathbf{U}^{(n)}$ can be obtained by solving the following subproblem:
\begin{equation}\label{e3}
\begin{split}
&\min\|\mathbf{A}_{(n)}-\mathbf{U}^{(n)}\mathbf{W}^{(n)}\|^2_F \\
&\mbox{s.t.}~  \mathbf{U}^{(n)}\in \mathbb{R}^{I_n\times J_n}_+,~\mathbf{U}^{(n)T}\mathbf{U}^{(n)}=\mathbf{I},
 ~\mathbf{W}^{(n)}\in \mathbb{R}^{J_n\times I_{n+1}\cdots I_{d}I_1\cdots I_{n-1}}_+.
 \end{split}
\end{equation}
We note that (\ref{e3}) is an orthogonal nonnegative matrix factorization problem.
Similar to \cite{Pan18}, we propose to solve the following optimization problem
instead\footnote{
Assume that $\mathcal{A}$ is orthogonally decomposable, i.e.,
${\cal A} = {\cal S} \times_1 {\bf U}^{(1)} \cdots \times_d {\bf U}^{(d)}$
with $\mathbf{U}^{(n)T}\mathbf{U}^{(n)}=\mathbf{I}$ for $1 \le n \le d$. It is clear that
$\mathbf{A}_{(n)}=\mathbf{U}^{(n)}\mathbf{W}^{(n)}$ for $1 \le n \le d$.
Since ${\bf U}^{(n)}$ is orthogonal, we know that
$\mathbf{W}^{(n)}=\mathbf{U}^{(n)T}\mathbf{A}_{(n)}$. It implies that
${\bf A}_{(n)} = {\bf U}^{(n)T} {\bf U}^{(n)} {\bf A}_{(n)}$. Here we propose to minimize the difference
between ${\bf A}_{(n)}$ and ${\bf U}^{(n)T} {\bf U}^{(n)} {\bf A}_{(n)}$.}
\begin{equation}\label{e4}
   \mathbf{U}^{(n)}=\mathop{\arg\min}\big\{ \|\mathbf{A}_{(n)}-\mathbf{U}^{(n)}\mathbf{U}^{(n)T}\mathbf{A}_{(n)}\|^2_F:
  \mathbf{U}^{(n)}\in \mathbb{R}^{I_n\times J_n}_+, \mathbf{U}^{(n)T}\mathbf{U}^{(n)}=\mathbf{I}\big\}.
\end{equation}
Next we study how to solve (\ref{e4}) efficiently.

\subsection{The Factor Matrix}

By using Lemma \ref{leOrg} and Lemma \ref{crin}, we rewrite problem (\ref{e4}) as follows:
for $n=1,2,\ldots, d$,
\begin{equation}\label{md1}
\begin{split}
    \min_{\mathbf {K}^{(n)}} &\
    F( \mathbf {K}^{(n)})=\frac{1}{2}\| \mathbf{A}_{(n)}- \mathbf{K}^{(n)}\mathbf{A}_{(n)}\|^2_F\\
    \mbox{s.t.} &\ tr(\mathbf{K}^{(n)})=J_n, \quad {\bf K}^{(n)}={\bf K}^{(n)T},
     \ ({\bf K}^{(n)})^2={\bf K}^{(n)},\quad {\bf K}^{(n)}\geq 0.
\end{split}
\end{equation}
According to Lemma 2, $\mathbf{K}^{(n)}$ has a block-like structure and $\|\mathbf{K}^{(n)}\|_1\leq I_n$. We therefore expect many entries of ${\bf K}^{(n)}$ are zero,
and present the following convex relaxation model ,
\begin{equation}\label{md3}
\begin{split}
    \min_{{\bf K}^{(n)}} &\
    F( {\bf K}^{(n)})=\frac{1}{2}\| \mathbf{A}_{(n)}- \mathbf{K}^{(n)}\mathbf{A}_{(n)}\|^2_F +\theta\| {\bf K}^{(n)} \|_1\\
    \mbox{s.t.} & \ tr({\bf K}^{(n)})=J_n,\quad {\bf K}^{(n)}={\bf K}^{(n)T},   \ \mathbf{0}\preceq {\bf K}^{(n)}\preceq \mathbf{I}, \quad {\bf K}^{(n)}\geq 0.
\end{split}
\end{equation}
where $\|\cdot\|_1$ is $\ell_1$ norm of matrix, $\mathbf{0}\preceq {\bf K}^{(n)}\preceq \mathbf{I}$ denotes that matrix $\mathbf{K}^{(n)}$  and matrix $\mathbf{I}-\mathbf{K}^{(n)}$ are  positive semidefinite. It is the convex hull of $\mathbf{K}^2 = \mathbf{K}$,
which leads to the convex problem (\ref{md3}). The use of $\| {\bf K}^{(n)} \|_1$ is to enforce the sparsity of ${\bf K}^{(n)}$ and $\theta$ is a positive number to control the balance among the two terms in the objective function.

Let $\mathbf{K}^{(n)}=\mathbf{X}^{(n)}$, $\mathbf{K}^{(n)}=\mathbf{Z}^{(n)}$, $\mathbf{K}^{(n)}=\mathbf{M}^{(n)}$, we have
\begin{equation}\label{cmd1}
\begin{split}
    \min_{{\bf K}^{(n)}} & \
    F( {\bf K}^{(n)})=\frac{1}{2}\| \mathbf{A}_{(n)}- \mathbf{K}^{(n)}\mathbf{A}_{(n)}\|^2_F+\theta\| {\bf X}^{(n)} \|_1\\
    \mbox{s.t.} & \ \mathbf{K}^{(n)}-\mathbf{X}^{(n)}=0,  \mathbf{K}^{(n)}-\mathbf{Z}^{(n)}=0, \mathbf{K}^{(n)}-\mathbf{M}^{(n)}=0,\\
     & \ tr({\bf K}^{(n)})=J_n, {\bf M}^{(n)}={\bf M}^{(n)T}, {\bf Z}^{(n)}\geq 0, \mathbf{0}\preceq {\bf M}^{(n)}\preceq \mathbf{I}.
\end{split}
\end{equation}
We apply the alternating direction method of multipliers to solve (\ref{cmd1}). The augmented Lagrangian function of (\ref{cmd1}) is given by
\begin{equation}\label{La}
\begin{split}
    & \ L(\mathbf{K}^{(n)},\mathbf{X}^{(n)},\mathbf{Z}^{(n)},\mathbf{M}^{(n)})\\
    =& \ \frac{1}{2}\| \mathbf{A}_{(n)}- \mathbf{K}^{(n)}\mathbf{A}_{(n)}\|^2_F+\theta\| {\bf X}^{(n)} \|_1+\delta_{\mathbb{R}^{I_n\times I_n}_+}(\mathbf{Z}^{(n)})
  \ -\langle\mathbf{\Lambda}^{(n)}_1,\mathbf{K}^{(n)}-\mathbf{X}^{(n)}\rangle-\langle\mathbf{\Lambda}^{(n)}_2, \mathbf{K}^{(n)}-\mathbf{Z}^{(n)}\rangle \\
   &\ -\langle\mathbf{\Lambda}^{(n)}_3,\mathbf{K}^{(n)}
-\mathbf{M}^{(n)}\rangle+\frac{\rho^{(n)}_1}{2}\|\mathbf{K}^{(n)}-\mathbf{X}^{(n)}\|^2_F
  \  +\frac{\rho^{(n)}_2}{2}\|\mathbf{K}^{(n)}
  -\mathbf{Z}^{(n)}\|^2_F + \frac{\rho^{(n)}_3}{2}\|\mathbf{K}^{(n)}-\mathbf{M}^{(n)}\|^2_F,
\end{split}
\end{equation}
where  $\delta_{\mathbb{R}_+^{I_n\times I_n}}$ denotes the indicator of $\mathbb{R}_+^{I_n\times I_n}$, i.e.,
$$
\delta_{\mathbb{R}_+^{I_n\times I_n}}(\mathbf{X}):=
\left\{\begin{array}{cll}
0, & \mbox{if} \ \mathbf{X}\in \mathbb{R}_+^{I_n\times I_n},\\
+\infty, & \mbox{otherwise}.\\
\end{array}\right.
$$
The iterative system of ADMM  is given as follows:
\begin{equation}\label{Sys}
\begin{split}
(\mathbf{K}^{(n)})_{i+1} &=\mathop{\arg\min}\big\{L(\mathbf{K},(\mathbf{X}^{(n)})_i,(\mathbf{Z}^{(n)})_i,(\mathbf{M}^{(n)})_i): tr(\mathbf{K}^{(n)})=J_n\big\},\\
((\mathbf{X}^{(n)})_{i+1},(\mathbf{Z}^{(n)})_{i+1},(\mathbf{M}^{(n)})_{i+1})&=\mathop{\arg\min}\big\{L((\mathbf{K}^{(n)})_{i+1},\mathbf{X}^{(n)},\mathbf{Z}^{(n)},\mathbf{M}^{(n)}):
\mathbf{0}\preceq \mathbf{M}^{(n)}\preceq \mathbf{I}, \mathbf{M}^{(n)T}=\mathbf{M}^{(n)}\big\}, \\
(\mathbf{\Lambda}^{(n)}_1) _{i+1}&= (\mathbf{\Lambda}^{(n)}_1)_{i}-\gamma^{(n)}\rho^{(n)}_1((\mathbf{K}^{(n)})_{i+1}-(\mathbf{X}^{(n)})_{i+1}),\\
(\mathbf{\Lambda}^{(n)}_2)_{i+1} &= (\mathbf{\Lambda}^{(n)}_2)_{i}-\gamma^{(n)}\rho^{(n)}_2((\mathbf{K}^{(n)})_{i+1}-(\mathbf{Z}^{(n)})_{i+1}), \\
 (\mathbf{\Lambda}^{(n)}_3)_{i+1}&= (\mathbf{\Lambda}^{(n)}_3)_{i}-\gamma^{(n)}\rho^{(n)}_3((\mathbf{K}^{(n)})_{i+1}-(\mathbf{M}^{(n)})_{i+1}),
\end{split}
\end{equation}
where $\gamma^{(n)}\in (0,(1+\sqrt{5})/2)$.

\subsubsection{The Computation of ${\bf K}^{(n)}$}
For $\mathbf{K}^{(n)}$, by \cite{Lai15}, we can solve it as
\begin{equation}\label{K}
   (\mathbf{ K}^{(n)})_{i+1}=(\mathbf{B}^{(n)})_i-\frac{tr((\mathbf{B}^{(n)})_i)-J_n}{I_n}\mathbf{I},
\end{equation}
where
\begin{equation*}
\begin{split}
&(\mathbf{B}^{(n)})_i=(\mathbf{P}+\mathbf{Q})\Big(\mathbf{A}_{(n)}\mathbf{A}_{(n)}^T+(\rho^{(n)}_1+\rho^{(n)}_2+\rho^{(n)}_3) \mathbf{I}\Big)^{-1}
\end{split}
\end{equation*}
with
\[
\begin{split}
\mathbf{P}=\mathbf{A}_{(n)}\mathbf{A}_{(n)}^T+(\mathbf{\Lambda}^{(n)}_1) _{i}+(\mathbf{\Lambda}^{(n)}_2) _{i}+(\mathbf{\Lambda}^{(n)}_3)_{i},\quad
\mathbf{Q}=\rho^{(n)}_1(\mathbf{X}^{(n)})_i+
\rho^{(n)}_2(\mathbf{Z}^{(n)})_i+\rho^{(n)}_3(\mathbf{M}^{(n)})_i.
\end{split}
\]

\subsubsection{The Computation of ${\bf X}^{(n)}$}
For $\mathbf{X}^{(n)}$, it is the shrinkage
\begin{equation}\label{X1}
\begin{split}
(\mathbf{X}^{(n)}&)_{i+1}
=\arg\min_{\mathbf{X}^{(n)}}\Big\{\theta\|\mathbf{X}^{(n)}\|_1+\frac{\rho^{(n)}_1}{2}\|\mathbf{X}^{(n)}-((\mathbf{K}^{(n)})_{i+1}-\frac{1}{\rho^{(n)}_1}(\mathbf{\Lambda}^{(n)}_1)_i)\|^2_F\Big\}.
\end{split}
\end{equation}
Thus,
\begin{equation}\label{X2}
    (\mathbf{X}^{(n)})_{i+1}=\mbox{Shrinkage}\Big((\mathbf{K}^{(n)})_{i+1}-\frac{1}{\rho^{(n)}_1}
    (\mathbf{\Lambda}^{(n)}_1)_i,\frac{\theta}{\rho^{(n)}_1}\Big),
\end{equation}
where $\mbox{Shrinkage}(x,\tau):= \mbox{sign}(x)\max\{|x|-\tau,0\}$ and  $\mbox{sign}(\cdot)$ denotes the signum function, i.e.,
$$
\mbox{sign}(x):= \left\{\begin{array}{cl}
1, & \mbox{if} \ x>0,\\
0, & \mbox{if} \ x=0,\\
-1, & \mbox{if} \  x<0.\\
\end{array}\right.
$$

\subsubsection{The Computation of ${\bf Z}^{(n)}$}
For $\mathbf{Z}^{(n)}$, it is the projection onto $\mathbb{R}^{I_n\times I_n}_+$,
\begin{equation}\label{Z1}
\begin{split}
(\mathbf{Z}^{(n)}&)_{i+1}
=\arg\min_{\mathbf{Z}^{(n)}}\Big\{\delta_{\mathbb{R}^{I_n\times I_n}_{+}}(\mathbf{Z}^{(n)})+\frac{\rho^{(n)}_2}{2}\Big\|\mathbf{Z}^{(n)}-\Big((\mathbf{K}^{(n)})_{i+1}-\frac{1}{\rho^{(n)}_2}(\mathbf{\Lambda}^{(n)}_2)_i\Big)\Big\|^2_F\Big\}.
\end{split}
\end{equation}
It is given by
\begin{equation}\label{Z2}
    (\mathbf{Z}^{(n)})_{i+1}=\Pi_{\mathbb{R}^{I_n\times I_n}_{+}}\Big((\mathbf{K}^{(n)})_{i+1}-\frac{1}{\rho^{(n)}_2}(\mathbf{\Lambda}^{(n)}_2)_i\Big),
\end{equation}
where $\Pi_{\mathbb{R}^{I_n\times I_n}_{+}}$ is the projection onto $\mathbb{R}^{I_n\times I_n}_{+}$.

\subsubsection{The Computation of ${\bf M}^{(n)}$}
For $\mathbf{M}^{(n)}$, it is
\begin{equation}\label{M1}
\begin{split}
(\mathbf{M}^{(n)})_{i+1}=\mathop{\arg\min}\Big\{\frac{\rho^{(n)}_3}{2}\Big\|\mathbf{M}^{(n)}
-((\mathbf{K}^{(n)})_{i+1}-\frac{1}{\rho^{(n)}_3}(\mathbf{\Lambda}^{(n)}_3)_i)\Big\|^2_F:~0\preceq \mathbf{M}^{(n)} \preceq I,  \mathbf{M}^{(n)T}=\mathbf{M}^{(n)}\Big\}.
\end{split}
\end{equation}
By the projection of a matrix on the symmetric positive matrix \cite[Section 4.3]{Qi06}  and \cite[Lemma 2.1]{Tsen98}, we have
\begin{equation}\label{M2}
     (\mathbf{M}^{(n)})_{i+1}=\frac{1}{2}\tilde{\mathbf{V}}^{(n)}\min\{\max\{\tilde{\Sigma}^{(n)},0\},1\}\tilde{\mathbf{V}}^{(n)T},
\end{equation}
where $\tilde{\mathbf{V}}^{(n)}$, $\tilde{\mathbf{\Sigma}}^{(n)}$ are the eigenvalue decomposition of
$$
(\mathbf{K}^{(n)})_{i+1}-\frac{1}{\rho^{(n)}_3}(\mathbf{\Lambda}^{(n)}_3)_i
+\Big((\mathbf{K}^{(n)})_{i+1}-\frac{1}{\rho^{(n)}_3}(\mathbf{\Lambda}^{(n)}_3)_i\Big)^T.
$$

\subsubsection{The Algorithm and Convergence Analysis}
The proposed algorithm for $\{\mathbf{K}^{(n)}\}^d_{n=1}$ is given in Algorithm 1. For each $\mathbf{K}^{(n)}$ with $n=1,2,\cdots, d$, in the alternating direction method of multipliers, two blocks of variables $\mathbf{K}^{(n)}$ and
$(\mathbf{X}^{(n)};\mathbf{Z}^{(n)};\mathbf{M}^{(n)})$ are updated in each iteration in our algorithm. The convergence of the algorithm can be guaranteed, see (\cite{Gaba76,boyd2011distributed,Lai15}).  The detailed proof is given in Appendix, we follow the main idea from \cite{Lai15}, the difference is our proof is based on our model which has one more constraint than theirs.

For $n$-th mode when $n=1,2,\ldots, d$,
\begin{theorem}
Assume that  $\gamma^{(n)}\in (0,(1+\sqrt{5})/2)$,
for any $\rho^{(n)}_1, \rho^{(n)}_2, \rho^{(n)}_3 > 0$, the iterative sequence $((\mathbf{K}^{(n)})_i;(\mathbf{X}^{(n)})_i;(\mathbf{Z}^{(n)})_i;(\mathbf{M}^{(n)})_i;(\mathbf{\Lambda}^{(n)}_1)_i;(\mathbf{\Lambda}^{(n)}_2)_i;(\mathbf{\Lambda}^{(n)}_3)_i)$ generated by Algorithm 1 from any initial point converges to $((\mathbf{K}^{(n)})_*;(\mathbf{X}^{(n)})_*;(\mathbf{Z}^{(n)})_*;(\mathbf{M}^{(n)})_*;(\mathbf{\Lambda}^{(n)}_1)_*;(\mathbf{\Lambda}^{(n)}_2)_*;(\mathbf{\Lambda}^{(n)}_3)_*)$, where $((\mathbf{K}^{(n)})_*;(\mathbf{X}^{(n)})_*;(\mathbf{Z}^{(n)})_*;(\mathbf{M}^{(n)})_*)$ is a solution of (\ref{md3}).
\end{theorem}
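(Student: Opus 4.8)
The plan is to recognize the iteration (\ref{Sys}) as a standard two-block ADMM applied to the separable convex program (\ref{cmd1}), and then to adapt the convergence analysis of \cite{Lai15}. First I would group the three splitting variables into a single block $\mathbf{y}^{(n)}=(\mathbf{X}^{(n)},\mathbf{Z}^{(n)},\mathbf{M}^{(n)})$ and write (\ref{cmd1}) in the form
$$\min_{\mathbf{K}^{(n)},\,\mathbf{y}^{(n)}}\ f(\mathbf{K}^{(n)})+g(\mathbf{y}^{(n)})\quad\mbox{s.t.}\quad \mathbf{K}^{(n)}=\mathbf{X}^{(n)},\ \mathbf{K}^{(n)}=\mathbf{Z}^{(n)},\ \mathbf{K}^{(n)}=\mathbf{M}^{(n)},$$
where $f(\mathbf{K}^{(n)})=\frac12\|\mathbf{A}_{(n)}-\mathbf{K}^{(n)}\mathbf{A}_{(n)}\|_F^2$ augmented by the indicator of the affine set $\{tr(\mathbf{K}^{(n)})=J_n\}$, and $g(\mathbf{y}^{(n)})=\theta\|\mathbf{X}^{(n)}\|_1+\delta_{\mathbb{R}_+^{I_n\times I_n}}(\mathbf{Z}^{(n)})$ plus the indicator of the spectrahedron $\mathcal{D}=\{\mathbf{M}:\mathbf{M}=\mathbf{M}^T,\ \mathbf{0}\preceq\mathbf{M}\preceq\mathbf{I}\}$. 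The crucial structural observation is that $g$ is separable and that $\mathbf{X}^{(n)}$, $\mathbf{Z}^{(n)}$, $\mathbf{M}^{(n)}$ are decoupled in the augmented Lagrangian (\ref{La}); hence the joint minimization over $\mathbf{y}^{(n)}$ in (\ref{Sys}) splits into the three independent subproblems (\ref{X1}), (\ref{Z1}), (\ref{M1}), and the scheme is a genuine two-block ADMM rather than a multi-block scheme whose convergence is not guaranteed in general.

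Next I would verify the hypotheses needed for convergence. Both $f$ and $g$ are proper, closed and convex: the quadratic and $\ell_1$ terms are convex and continuous, and each constraint set is closed and convex, so its indicator is proper closed convex. The problem also admits a saddle point. The feasible set is compact, since $tr(\mathbf{K}^{(n)})=J_n$ together with $\mathbf{0}\preceq\mathbf{M}^{(n)}\preceq\mathbf{I}$ and the coupling equalities bound every variable, so a minimizer of (\ref{cmd1}) exists; and since all the coupling constraints are affine, a KKT point exists, fixing optimal multipliers $((\mathbf{\Lambda}^{(n)}_1)_*,(\mathbf{\Lambda}^{(n)}_2)_*,(\mathbf{\Lambda}^{(n)}_3)_*)$ against which convergence is measured.

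The heart of the argument, following \cite{Lai15}, is a Lyapunov (contraction) estimate. I would write the first-order optimality conditions of the $\mathbf{K}^{(n)}$-subproblem and of each of the $\mathbf{X}^{(n)}$-, $\mathbf{Z}^{(n)}$-, $\mathbf{M}^{(n)}$-subproblems as variational inequalities, insert the multiplier updates from (\ref{Sys}), and combine them with the monotonicity inequalities evaluated at the KKT point. This produces a nonnegative merit function of the form
$$\Phi_i=\sum_{k=1}^{3}\tfrac{1}{\gamma^{(n)}\rho^{(n)}_k}\|(\mathbf{\Lambda}^{(n)}_k)_i-(\mathbf{\Lambda}^{(n)}_k)_*\|_F^2+\rho^{(n)}_1\|(\mathbf{X}^{(n)})_i-(\mathbf{X}^{(n)})_*\|_F^2+\rho^{(n)}_2\|(\mathbf{Z}^{(n)})_i-(\mathbf{Z}^{(n)})_*\|_F^2+\rho^{(n)}_3\|(\mathbf{M}^{(n)})_i-(\mathbf{M}^{(n)})_*\|_F^2,$$
which I would show is non-increasing, with the per-step decrease bounded below by a positive multiple of the squared primal and dual residuals. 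The restriction $\gamma^{(n)}\in(0,(1+\sqrt5)/2)$ (the golden ratio) is precisely what keeps the coefficient of this residual term positive once the cross terms generated by the relaxed multiplier update are absorbed.

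From here the conclusion is routine: the sequence $\Phi_i$ is bounded, hence all iterates are bounded and the residuals $(\mathbf{K}^{(n)})_i-(\mathbf{X}^{(n)})_i$, $(\mathbf{K}^{(n)})_i-(\mathbf{Z}^{(n)})_i$ and $(\mathbf{K}^{(n)})_i-(\mathbf{M}^{(n)})_i$ are square-summable and therefore tend to zero; any cluster point then satisfies the KKT system of (\ref{cmd1}), and by convexity its $\mathbf{K}^{(n)}$-component solves (\ref{md3}); re-centering the (still non-increasing) merit function at this particular limit forces the whole sequence to converge to it. The main obstacle I anticipate is the Lyapunov estimate itself --- specifically, carrying the extra split variable and its constraint (the single point the authors flag as their difference from \cite{Lai15}) through all the cross terms so that the golden-ratio threshold on $\gamma^{(n)}$ emerges with the correct sign. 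By contrast, the existence of the saddle point and the passage to the limit are comparatively standard.
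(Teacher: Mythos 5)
Your proposal is correct and follows essentially the same route as the paper's Appendix proof: the paper likewise treats $(\mathbf{X}^{(n)},\mathbf{Z}^{(n)},\mathbf{M}^{(n)})$ as a single second block (noting the subproblems decouple), establishes a saddle-point characterization of the minimizers, derives the variational inequalities from each subproblem's optimality conditions, and shows that the merit function $\lambda_1\|\triangle b_1^i\|^2+\lambda_2\|\triangle b_2^i\|^2+\lambda_3\|\triangle b_3^i\|^2+\lambda_1\|\triangle M^i\|^2+\lambda_2\|\triangle X^i\|^2+\lambda_3\|\triangle Z^i\|^2$ (your $\Phi_i$ up to the scaling of the multipliers) is non-increasing, from which vanishing residuals, boundedness, and identification of the limit point as a minimizer follow exactly as you outline.
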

After $\mathbf{K}^{(n)}$ is computed by Algorithm 1, $\mathbf{U}^{(n)}$ can be recovered based on Lemma 2, i.e., we compute the $J_n$ eigenvectors corresponding to $J_n$ largest eigenvalues of $\mathbf{K}^{(n)}$, then use hard clustering evaluation \cite{Ding06} to form ${\bf U}^{(n)}$. Or one can simply use some known clustering methods \cite{Hart75,Luxb07, Macq67} such as $k$-means, spectral clustering to get $\mathbf{U}^{(n)}$ from $\mathbf{K}^{(n)}$.

\begin{algorithm}[h]
\caption{Alternating direction method of multipliers for model ONTD}
\begin{algorithmic}[1]
\REQUIRE Given $\mathcal{A} \in\mathbb{R}_{+}^{I_1\times I_2\times \cdots \times I_d}$, $(J_1,J_2,\cdots,J_d)$, the parameters $\theta$,  $\rho_1$, $\rho_2$, $\rho_3$, $\gamma$, initial values $({\bf K}^{(n)})_1\in\mathbb{R}_{+}^{I_n\times I_n}$, $({\bf X}^{(n})_1\in\mathbb{R}^{I_n\times I_n}_{+}$, $({\bf Z}^{(n)})_{1}\in\mathbb{R}^{I_n\times I_n}_{+}$, $({\bf M}^{(n)})_{1}\in\mathbb{R}^{I_n\times I_n}_{+}$, $({\bf \Lambda}^{(n)}_1)_{1}\in\mathbb{R}^{I_n\times I_n}$, $({\bf \Lambda}^{(n)}_2)_{1}\in\mathbb{R}^{I_n\times I_n}$, $({\bf \Lambda}^{(n)}_3)_{1}\in\mathbb{R}^{I_n\times I_n}$, and the stopping criterion $\epsilon$.
\ENSURE ${\bf K}$
\STATE \textbf{Step 0}. Unfolding tensor $\mathcal{A}$ from $n$ mode, obtain the $n$-mode matrix $\mathbf{A}_{(n)}$.
\STATE \textbf{Step 1}. Compute $(\mathbf{K}^{(n)})_{i+1}$ by (\ref{K}).
\STATE \textbf{Step 2}. Compute $(\mathbf{X}^{(n)})_{i+1}$, $(\mathbf{Z}^{(n)})_{i+1}$, $(\mathbf{M}^{(n)})_{i+1}$ by (\ref{X2}), (\ref{Z2}), (\ref{M2}), respectively.
\STATE \textbf{Step 3}. Update $(\mathbf{\Lambda}^{(n)}_1)_{i+1}$, $(\mathbf{\Lambda}^{(n)}_2)_{i+1}$, $(\mathbf{\Lambda}^{(n)}_3)_{i+1}$ by (\ref{Sys}).
\STATE \textbf{Step 4}. If the termination criterion is not met, go to Step 1.
\end{algorithmic}
\end{algorithm}

\subsection{The Core Tensor ${\cal S}$}

After we get the factor matrices $\{\mathbf{U}^{(n)}\}^d_{n=1}$, the core tensor $\mathcal{S}$ can be obtained by
\begin{equation}\label{core}
\begin{split}
\mathcal{S}=\mathop{\arg\min}\big\{&\|\mathcal{A}-\mathcal{S}\times_1 \mathbf{U}^{(1)}\times_2 \mathbf{U}^{(2)}\cdots \times_d \mathbf{U}^{(d)}\|^2_F: \mathcal{S}\in \mathbb{R}^{J_1\times J_2\times\cdots\times_d J_d}_+\big\}
\end{split}
\end{equation}
Here, we use $n$-th mode $\mathbf{S}_{(n)}$ to fold into the core tensor $\mathcal{S}$.
We remark that (\ref{e5}) is the unfolding version of (\ref{core}), the solution of (\ref{e5}) is the same as that of (\ref{core}).
It is valid to choose any value of $n$ in between 1 and $d$.

It is clear that $\mathbf{S}_{(n)}$ can be solved by the following least square problem:
\begin{equation}\label{e5}
\begin{split}
\mathbf{S}_{(n)}=\mathop{\arg\min}\big\{&\|\mathbf{A}_{(n)}-\mathbf{U}^{(n)}\mathbf{S}_{(n)}(\mathbf{U}^{(n+1)}\otimes \mathbf{U}^{(n+2)}\cdots\ \ \cdots \otimes\mathbf{U}^{(d)}\otimes\mathbf{U}^{(1)}\otimes \cdots \otimes\mathbf{U}^{(n-1)})^{T}\|^2_F: \\
& \ \mathbf{S}_{(n)}\in \mathbb{R}^{J_n\times J_{n+1}\cdots J_d J_1\cdots J_{n-1}}_+\big\}.
\end{split}
\end{equation}
We remark that (\ref{e5}) is valid for $1 \le n \le d$.
Let $\mathbf{U}^{(\backslash n)}=\mathbf{U}^{(n+1)}\otimes \mathbf{U}^{(n+2)}\cdots
     \cdots \otimes\mathbf{U}^{(d)}\otimes\mathbf{U}^{(1)}\otimes \cdots \otimes\mathbf{U}^{(n-1)}$ and take use of the property of Kronecker product that $vec(\mathbf{U}^{(n)}\mathbf{S}_{(n)}\mathbf{U}^{(\backslash n) T})=(\mathbf{U}^{(\backslash n)}\otimes\mathbf{U}^{(n)})vec(\mathbf{S}_{(n)})$, (\ref{e5}) can be written as follows,
\begin{equation}\label{core1}
\begin{split}
 \mathop{\min}_{\mathbf{S}_{(n)}} \ \|vec(\mathbf{A}_{(n)})-(\mathbf{U}^{(\backslash n)}\otimes\mathbf{U}^{(n)})vec(\mathbf{S}_{(n)})\|^2_F \quad
 \mbox{s.t.} \quad  \ \mathbf{S}_{(n)}\in \mathbb{R}^{J_n\times J_{n+1}\cdots J_d J_1\cdots J_{n-1}}_+.
 \end{split}
\end{equation}
Denote $(\mathbf{U}^{(\backslash n)}\otimes\mathbf{U}^{(n)})$ as $\mathbf{L}$, the least square solution of (\ref{core1}) can be obtained by
\begin{equation}\label{S}
  vec(\mathbf{S}_{(n)})=\Pi_{\mathbb{R}^{I_n\times I_n}_{+}}\Big((\mathbf{L}^T\mathbf{L})^{-1}\mathbf{L}^Tvec(\mathbf{A}_{(n)})\Big).
\end{equation}
Core tensor $\mathcal{S}$  can be obtained by folding $\mathbf{S}_{(n)}$ from $n$-th mode.

Note that if $\{\mathbf{U}^{(n)}\}^d_{n=1}$ are column orthogonal, then
$$
\mathcal{S}=\mathcal{A}\times_1 \mathbf{U}^{(1)T}\times_2 \mathbf{U}^{(2)T}\cdots \times_d \mathbf{U}^{(d)T}.
$$

\section{Experimental Results}

In this section, we conduct experiments on three applied areas to test the performance of the proposed ONTD model. All experiments are run on Intel(R) Core(TM) i7-5600 CPU @2.60GHZ with 8GB of RAM.

As comparison, we use some general methods in the experiments, for example, the tensor methods such as NTD\cite{Kim07}, HOOI\cite{DeLa00best} and matrix methods like  NMF\cite{Lee99, Berr07}, SN-ONMF\cite{Pan18}.

\subsection{Feature Extraction and Face Recognition}
In this subsection, we use face ORL database\cite{Sam94} which contains 400 images of 40 individuals. The images are captured at different times and different variations including facial details and expression. These images are in gray scale and normalized to the resolution of $112\times 92$ pixels. We randomly select $p\times 100\%$ sample images for each person to be the training data, and the others are used for testing. Use the tensor methods, we decompose the training data with setting $(J_1, J_2)$. The slices of $\mathcal{S}\times_1 \mathbf{U}^{(1)}\times_2 \mathbf{U}^{(2)}$ can be seen as basis images. For comparison, matrix methods are also used to learn the basis matrix from the flattened training data with the same number of features. Once we get the basis matrix from the training data, the nonnegative projection of a new test sample onto each basis matrix is used as the features for recognition. The KNN classifier is used for recognition by using the extracted features, here the distance is measured by their correlations.

First we use the basis images based on $20\%$ training data set (80 images). The number of features is setting to $J_3$. To show reconstruction capacity for original image, we define compression ratio
$$
compression~~ratio=\frac{Compressed~~size}{Original~~size}\times 100\%
$$
and the average reconstruction error
$$
avg.error=\frac{1}{N}\sum^N_{k=1}\frac{\|I^k_{original}-I^k_{recon}\|_F}{\|I^k_{original}\|_F},
$$
where $I^{k}_{original}$ represents the $k$-th original image, $I^{k}_{recon}$ represents the $k$-th reconstruction image, $N$ is the number of the images.

In Table \ref{1}, for tensor methods, we summarize the average reconstruction error
according to different $(J_1,J_2,J_3)$, here $(J_1,J_2)=(5,5)$, $(10,10)$, $(15, 15)$, $(20,20)$, $(25,25)$, $(30,30)$, $(40,40)$, $(60,60)$. It can be seen from Table \ref{1} that the reconstruction error  decrease when we increase $J_3$.  From Table \ref{1},  the reconstruction error of ONTD and HOOI are similar, which is less than NTD.
\begin{table*}[!t]
  \centering
    \caption{The face-reconstruct average error of different methods with different number of features.}\label{1}
  \begin{tabular}{|c|c|c|c|c|c|c|c|c|}
  \hline
   ~~ & $J_3=5$ & $J_3=10$  & $J_3=15$ & $J_3=20$ & $J_3=25$& $J_3=30$ & $J_3=40$ & $J_3=60$ \\
    \hline
  ONTD&0.0029&0.0025&0.0023&0.0022&0.0021&0.0020&0.0019&0.0017\\
   \hline
  HOOI&0.0029&0.0025&0.0023&0.0022&0.0021&0.0020&0.0019&0.0017\\
   \hline
   NTD&0.0030&0.0027&0.0026&0.0025&0.0024&0.0024&0.0023&0.0022\\
  \hline
\end{tabular}
\end{table*}

In the following, we set $(J_1,J_2)$ to be $(15,15)$ and get the basic matrix from different training data set. Test the the recognition accuracy on the rest data set (i.e., testing set) according to the basic matrix. We show the recognition results on Table \ref{2}-Table \ref{5}.  It can be seen that the number of features ($J_3$) affects the recognition. A Large value of $J_3$ often leads to a higher accuracy, but when $J_3$ is too large, the accuracy decrease. We also notice that the recognition accuracy can be increase when using more training data. From Table \ref{2}-Table \ref{5}, the accuracy obtained by our method (ONTD) is higher than the other methods in most cases and ONTD  takes much less time than the other tensor methods.

\begin{table*}[!t]
  \centering
  \caption{The face-reconstruct results of different methods by using 20\% as training set.}\label{2}
  \begin{tabular}{|c|c|c|c|c|c|c|c|c|}
  \hline
  \multirow{2}{*}& \multicolumn{2}{|c|}{$J_3$=20} & \multicolumn{2}{|c|}{ $J_3$=40}  & \multicolumn{2}{|c|}{$J_3$=60}& \multicolumn{2}{|c|}{$J_3$=80}\\
 \cline{2-9}&accuracy&time&accuracy&time&accuracy&time&accuracy&time\\
 \hline
 ONTD&\textbf{0.7750}&4.33&\textbf{0.8063}&5.93&\textbf{0.8063}&3.63&0.7531&5.24\\
 \hline
 HOOI&0.7250&146.37&0.7750&309.97&0.7750&434.96&0.7063&350.29\\
 \hline
 NTD&0.3000& 121.35&0.3500&130.94&0.3594& 146.22&0.3875&153.57 \\
 \hline
 NMF& 0.7031&150.82&0.7156&176.50&0.7000 &211.96& 0.7281&273.69\\
 \hline
 SN-ONMF&0.7125& 2.04&0.7438&2.06&0.6844&1.94&\textbf{0.7688}& 1.97 \\
 \hline
 \end{tabular}
\end{table*}

\begin{table*}[!t]
  \centering
  \caption{The face-reconstruct results of different methods by using 30\% as training set.}\label{3}
  \begin{tabular}{|c|c|c|c|c|c|c|c|c|}
  \hline
  \multirow{2}{*}& \multicolumn{2}{|c|}{$J_3$=20} & \multicolumn{2}{|c|}{ $J_3$=40}  & \multicolumn{2}{|c|}{$J_3$=60}& \multicolumn{2}{|c|}{$J_3$=80}\\
 \cline{2-9}&accuracy&time&accuracy&time&accuracy&time&accuracy&time\\
 \hline
 ONTD&  \textbf{0.8000}&3.92&\textbf{0.8393}&5.10&\textbf{0.8321}&6.42&\textbf{0.8250}&6.70\\
 \hline
 HOOI&0.7536&125.22&0.8143 &182.70&0.8036&207.25&0.7321&387.94\\
 \hline
 NTD&0.4750& 149.29& 0.4571&157.66& 0.4571 & 163.61& 0.4321 & 214.32 \\
 \hline
 NMF& 0.7357&219.93& 0.7893&256.74&0.7536&291.91& 0.7321&372.42\\
 \hline
 SN-ONMF&0.7321& 3.99&0.7857&4.06&0.7321&4.11&0.6500& 4.24 \\
 \hline
 \end{tabular}
\end{table*}

\begin{table*}[!t]
  \centering
  \caption{The face-reconstruct results of different methods via using 40\% as training set.}\label{4}
  \begin{tabular}{|c|c|c|c|c|c|c|c|c|}
  \hline
  \multirow{2}{*}& \multicolumn{2}{|c|}{$J_3$=20} & \multicolumn{2}{|c|}{ $J_3$=40}  & \multicolumn{2}{|c|}{$J_3$=60}& \multicolumn{2}{|c|}{$J_3$=80}\\
 \cline{2-9}&accuracy&time&accuracy&time&accuracy&time&accuracy&time\\
 \hline
 ONTD&  \textbf{0.8750}&10.06&\textbf{0.8750}&9.12&\textbf{0.8875}&9.84& \textbf{0.8958}&8.46\\
 \hline
 HOOI&0.8083& 186.10&0.8375  &309.56&0.8500& 416.10&0.8500&542.87\\
 \hline
 NTD&0.4417&215.10& 0.4667&220.56& 0.4833 & 267.85&0.5542 & 303.57 \\
 \hline
 NMF&  0.7667&376.67& 0.8333 &420.3&0.8250&457.10&  0.7833&502.94\\
 \hline
 SN-ONMF&0.8208& 6.53&0.8208&7.42&0.7917&6.83&0.7500& 6.46 \\
 \hline
 \end{tabular}
\end{table*}

\begin{table*}[!t]
  \centering
  \caption{The face-reconstruct results of different methods by using 50\% as training set.}\label{5}
  \begin{tabular}{|c|c|c|c|c|c|c|c|c|}
  \hline
  \multirow{2}{*}& \multicolumn{2}{|c|}{$J_3$=20} & \multicolumn{2}{|c|}{ $J_3$=40}  & \multicolumn{2}{|c|}{$J_3$=60}& \multicolumn{2}{|c|}{$J_3$=80}\\
 \cline{2-9}&accuracy&time&accuracy&time&accuracy&time&accuracy&time\\
 \hline
 ONTD&  \textbf{0.8800}&12.24&\textbf{0.8800}&12.91&\textbf{0.8900}&5.74& \textbf{0.8800}&11.60\\
 \hline
 HOOI&0.8200& 186.47&0.8400  &296.72&0.8600 & 412.33& 0.8100 &552.86\\
 \hline
 NTD&0.4450&296.75& 0.5250&328.19& 0.4300 & 347.22&0.4950 & 375.75 \\
 \hline
 NMF&  0.8050&493.78&0.8150 &470.28& 0.7950&586.04&  0.8050&684.95\\
 \hline
 SN-ONMF& 0.8350& 9.68&0.8400&9.81&0.8250&9.92& 0.7600& 10.17 \\
 \hline
 \end{tabular}
\end{table*}


\subsection{Image Representation}
In the following data sets, the image sets are belong to several different classes, each class is represented by a 3-order tensor. For example, there are $N$ tensor objects size of $I_1\times I_2\times I_3$ that can be classified into $r$ classes. To implement the tensor methods, we concatenate these $N$ 3-order tensor objects to a $I_1\times I_2\times I_3\times N$ tensor. We follow the idea in \cite{Lixu17,Phan10}, i.e., given approximate rank $(J_1,J_2,J_3)$,  use the tensor methods to find three common  projection matrices $\mathbf{U}^{(1)}$, $\mathbf{U}^{(2)}$, $\mathbf{U}^{(3)}$ for these 3-order tensor objects. These tensor objects then can be represented as $N$ corresponding core tensors of size $ (J_1,J_2,J_3)$. When use the matrix methods, we vectorize $N$ tensor objects and get a $I_1I_2I_3\times N$ matrix first.

To test the performance of the proposed model, we reduce the dimension of tensor objects and report the space savings.
$$
Space~~savings=1-\frac{Compressed~~size}{Original~~size}
$$

Since the data sets have ground truth class label, We also classify these $N$ reduced tensor objects (core tensor) with nearest neighborhood classifier and use the leave-one-out scheme, then show the average precision to evaluate the classification performance.
\subsubsection{The ORL Database of Faces}
Here we use ORL database of faces again. As we know, there are ten different images of each of 40 distinct individual. Each image is of $92\times 112$ pixels. We first uniformly resized them into $20 \times 20$, then randomly choose two images from ten for each subjects to form a $20 \times 20 \times 2$ tensor. Therefore, for each person, we get five tensor objects. From 40 distinct individuals, we can obtain 200 tensor objects totally. We select 40\% tensor objects from 200 tensor objects randomly to form a $20 \times 20 \times 2 \times 80$ training tensor to tune a good approximate rank. The rest 120 tensor objects are used to form the test data set to evaluate the results.

Firstly we set the approximate rank to be $[J_1,J_2,J_3]$ for the tensor methods, and apply these methods on the training set, we get the common matrices $(\mathbf{U}^{(1)}, \mathbf{U}^{(2)}, \mathbf{U}^{(3)})$ and 80 core tensors of $[J_1,J_2,J_3]$. For matrix methods, we unfold the tensors to a $800 \times 80$ matrix. The approximate rank for matrix is set to be $r$. We compute the best parameters $(J_1,J_2,J_3)$ to get the highest precision by using the the tensor methods, or the best parameter $r$ to get its largest precision by each matrix method. Then the parameters $(J_1,J_2,J_3)$ or $r$ are used in the test set. We show the results in Table \ref{6}.

 From Table \ref{6}, compare to other four methods, ONTD has the highest precision and the least time.

\begin{table}[htbp]
  \centering
    \caption{The clustering  results for ORL database.}\label{6}
  \begin{tabular}{|c|c|c|c|c|}
  \hline
  ~& Best~parameter & precision & Space saving & time\\
  \hline
ONTD &$[9,3,2]$ & 0.1375&  0.9300 & 0.03 \\
  \hline
HOOI &$[3,3,2]$&0.125&   0.9762 & 18.79\\
\hline
NTD &$[7,3,2]$& 0.0917&     0.9454 &  5.76\\
     \hline
NMF &15 & 0.125     &   0.8562   &    0.15\\
     \hline
SN-ONMF &25 &0.1 &      0.7604&    6.366\\
     \hline
\end{tabular}
\end{table}
\subsubsection{Cambridge Gesture database}
Cambridge Gesture data sets contains 900 images from nine hand gesture classes\cite{KimTK09}. Each class has 100 image sequences (5 illuminations $\times$ 10 arbitrary motions $\times$ 2 subjects).  Each image has $320\times 240$ pixels. For all video sequences, we use the gray scale representation and  uniformly resized them  into $20 \times 20 \times 32$. We get 900 tensor objects of size $20\times 20\times 32$. We select $30\%$ tensor objects from 900 tensor objects randomly, i.e., $270$ tensor objects of size $20\times 20\times 32$. From 270 tensor objects, we randomly use 90 tensor objects to form a $20\times 20\times 32\times 90$ training tensor to tune a good approximate rank for all methods, a $20\times 20\times 32\times 180$  tensor which is formed by the rest 180 tensor objects, are utilized as the test set to evaluate the results.

We set the approximate rank to be $(J_1,J_2,J_3)$ for all the tensor methods and apply them on the training set, we get the common matrices $(\mathbf{U}^{(1)}, \mathbf{U}^{(2)}, \mathbf{U}^{(3)})$ and 90 core tensors of $(J_1,J_2,J_3)$. For matrix methods, similarly, we unfold the tensors to a $12800\times 90$ matrix. The approximate rank for matrix is set to be $r$. We tune the best parameters, i.e., approximate rank ($J_1,J_2,J_3$) or rank $r$ by training set. The best parameters are then used in the test set.

 The results of test set are listed in Table \ref{7}. From space savings, we find that matrix methods needs more storage than tensor methods, it implies that tensor methods performs better in the data reduction. It is very important especially when data set is extreme large. Note that the precision of ONTD is the largest and the time is the least of all methods.

\begin{table}[htbp]
  \centering
  \caption{The clustering  results for Cambridge database.}\label{7}
  \begin{tabular}{|c|c|c|c|c|}
  \hline
  ~& Best~parameter & precision & Space saving & time\\
  \hline
ONTD &$[7,7,3]$&0.8889& 0.9884&0.04\\
  \hline
HOOI &$[7,7,2]$&0.8667& 0.9922&13.64\\
\hline
NTD &$[8,7,3]$&0.6333& 0.9867&93.14\\
     \hline
NMF &10 &  0.7556&  0.9437 &3.97\\
     \hline
SN-ONMF &25 & 0.6444 &  0.8592 &12.64\\
     \hline
\end{tabular}
\end{table}

\subsubsection{KTH Human Action database}
KTH database \cite{Schu04} contains six types of human actions, including walking, running, jogging, boxing, hand clapping and hand waving. There are totally 600 videos and each type has 100 videos.  The action videos are performed several times by 25 subjects in 4 different scenario (outdoors, outdoors with scale variation, outdoors with different clothes and indoors). The resolution of video frames is $160\times 120$ pixels. The frames for the first time action of each video are extracted as our video data. To standardize the length of videos, we sample 64 frames from each video and take the middle 32 frames. For each video, we use grayscale representation and resize into $20\times 20\times 32$. We get 600 tensor objects whose size is $20\times 20\times 32$, then we select 200 tensor objects randomly. Of 200 selected tensor objects, 100 is used to form a $20\times 20\times 32\times 100$ training tensor, the rest 100 tensor objects are utilized as the test set to show the performance of these methods.

For each tensor object, we set the size of the core tensor to be $[J_1,J_2,J_3]$, and apply the methods on the training set. For matrix methods, similarly, we unfold the tensor to be a $12800\times 100$ matrix and the approximate rank is set to be $r$. We tune the best approximate rank  for each method that computed on the training set to get the largest precision. Then use these parameters on test set to test the performance of each method. From Table \ref{8}, the precision of ONTD is the largest of all methods and the time cost is the least too. From space saving, it has a very similar results for each tensor method. All tensor methods perform much better than matrix methods in data reduction.

\begin{table}[!t]
  \centering
  \caption{The clustering  results for KTH database.}\label{8}
  \begin{tabular}{|c|c|c|c|c|}
  \hline
  ~& Best~parameter & precision   & Space saving & time\\
  \hline
ONTD &$[3,6,2]$ & 0.4400&  0.9970 & 0.02 \\
  \hline
HOOI &$[2,2,2]$&0.3500&      0.9993 &  3.94\\
\hline
NTD &$[2,3,5]$& 0.3000 &       0.9975 &   51.52\\
     \hline
NMF &3 & 0.3800      &   0.9698   &    0.53\\
     \hline
SN-ONMF &16 &0.4200 &        0.8388 &      2.80\\
     \hline
\end{tabular}
\end{table}

\subsubsection{The MNIST database}
The MNIST database of handwritten digits has a training set of 60000 examples, and a test set of 10000 examples. There are ten different patterns from 0 to 9. Each image is $28\times 28$ pixels. Of the training data set, we randomly choose 90\% images of each pattern, i.e. 5400 images,  to form 54 tensor objects whose size is $28 \times 28 \times 100$ tensor. We obtain 540 tensor objects totally from 10 different patterns. Therefore the size of the training tensor is $28 \times 28 \times 100\times 540$. For test data, similarly, from test set, we select 80\% images of each pattern, i.e. 800 images, to form 8 tensor objects whose size is $28 \times 28 \times 100$. Thus, there are totally 80 tensor objects from 10 patterns. We get the testing tensor sizes of $28 \times 28 \times 100 \times 80$ finally.

We set the approximate rank to be $[J_1,J_2,J_3]$ for all the tensor methods and $r$ for all the matrix methods, then apply these methods on the training data. We tune best parameters approximate rank ($J_1,J_2,J_3$) or rank $r$. Then the best parameter are applied in the test data. In Table \ref{9}, we list the results of test data. It is clearly that all tensor methods perform much better than matrix methods in data reduction.  The precision of the methods are all very high, and ONTD takes much less time than other tensor methods.


\begin{table}[!t]
  \centering
  \caption{The clustering  results for MNIST database.}\label{9}
  \begin{tabular}{|c|c|c|c|c|}
  \hline
  ~& Best~parameter & precision  & Space saving & time\\
  \hline
ONTD &$[2,2,4]$ & 1&  0.9997 &0.31 \\
  \hline
HOOI &$[2,2,3]$&1&  0.9998 &  103.68\\
\hline
NTD &$[2,2,4]$&1&   0.9997  &   656.86\\
     \hline
NMF &5 & 0.9875    &   0.9374   &    17.17\\
     \hline
SN-ONMF &6 &1&     0.9249  &     0.18\\
     \hline
\end{tabular}
\end{table}

\subsection{Hyperspectral Unmixing}

The hyperspectral imaging collects information from the object by taking at different wavelengths. The images are obtained by measuring the percentage of the
light hitting a material which is called reflectance. Like other spectral imaging, the purpose of hyperspectral imaging is to find objects, identify materials or
detect processes. It therefore has wide applications in agriculture, mineralogy, physics, environment and many other fields. Hyperspectral unmixing aims to
classify the pixels to different clusters, with each corresponding to a material.

We apply the partial ONTD model on hyperspectral unmixing, precisely, given an $I_1\times I_2\times I_3$ nonnegative tensor $\mathcal{A}$ and a factorization rank $r$, solve
\begin{equation*}
\begin{split}
\min_{\mathbf{U}^{(1)},\mathcal{S}}
& \ \|\mathcal{A}-\mathcal{S}\times_1 \mathbf{U}^{(1)}\times_2 \mathbf{I}\times_3 \mathbf{I}\|^2_F \\
\mbox{s.t.}& \ \mathcal{S}\geq 0,~\mathbf{U}^{(1)}\geq 0,~\mathbf{U}^{(1)T}\mathbf{U}^{(1)}=\mathbf{I},\\
&\ \mathbf{U}^{(1)}\in \mathbb{R}^{I_1\times r},\mathcal{S}\in \mathbb{R}^{r\times I_2\times I_3}.
\end{split}
\end{equation*}

In this subsection, we use the Samson data set \cite{Zhu14spe}. In the image, there are $952\times 952$ pixels, and each pixel is recorded at 156 channels that cover the wavelengths from 401 nm to 889 nm.  We  use a region of  $95\times 95$ pixels starting from $(252,332)$ pixel in the original image. Three different materials are in this image, they are ``Tree'', ``Rock'', and ``Water'' respectively. We form a 3-order tensor $\mathcal{A}\in \mathbb{R}^{156\times 95\times 95}$, where 156 represents the number of spectral bands, 95 and 95 denote the row and column number of the hypercube, respectively. Moreover, we set $r=3$ here.

\begin{figure*}[!t]
\mbox{\hspace{-8mm}}\includegraphics[width=0.55\textwidth,height=2.7cm]{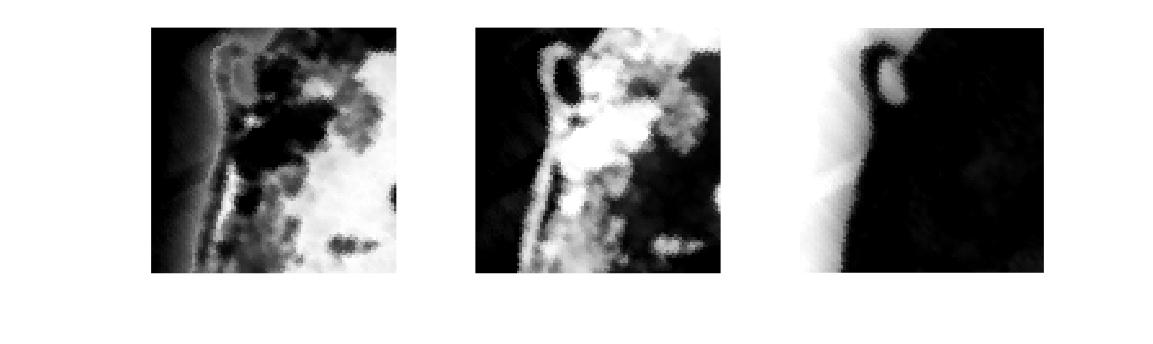}\includegraphics[width=0.55\textwidth,height=2.7cm]{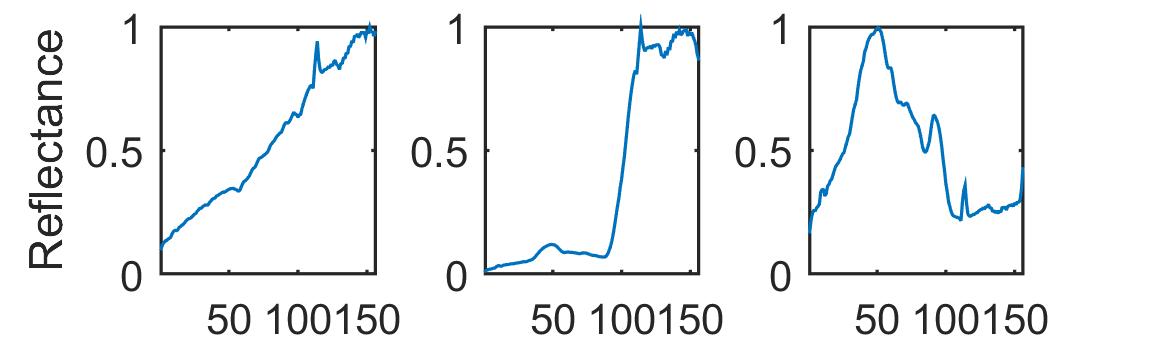} \\
\mbox{\hspace{-8mm}}\includegraphics[width=0.55\textwidth,height=2.7cm]{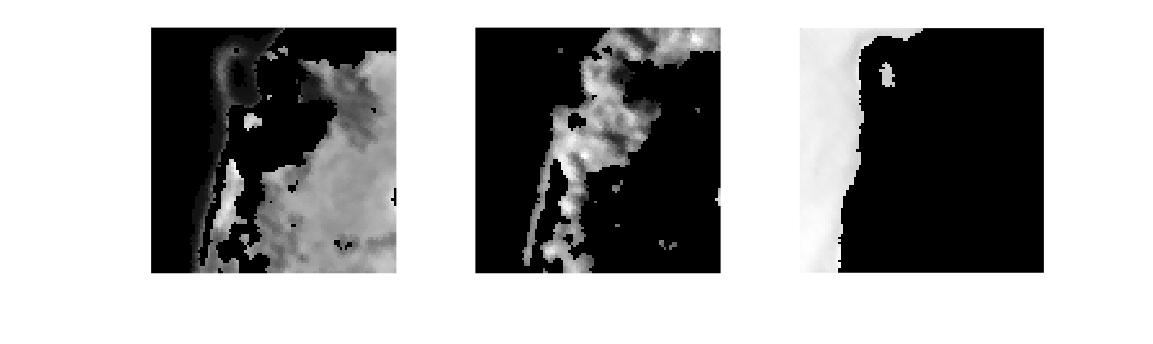}\includegraphics[width=0.55\textwidth,height=2.7cm]{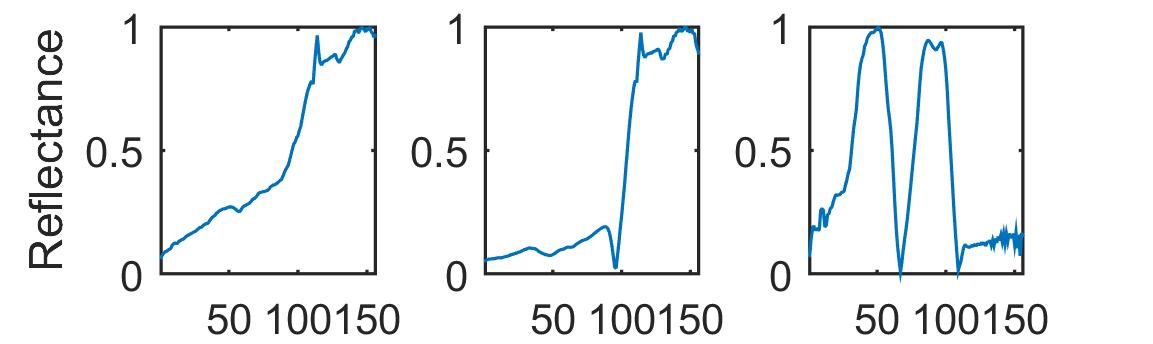}
\\
\mbox{\hspace{-8mm}}\includegraphics[width=0.55\textwidth,height=2.7cm]{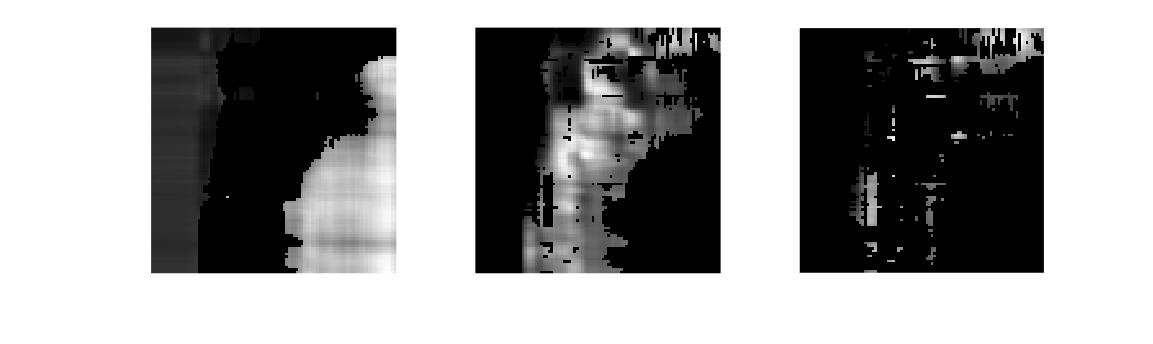}\includegraphics[width=0.55\textwidth,height=2.7cm]{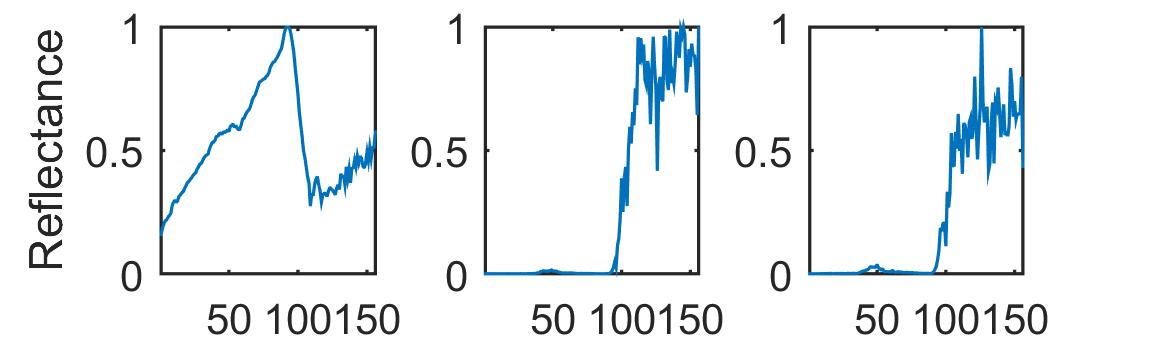} \\
\mbox{\hspace{-8mm}}\includegraphics[width=0.55\textwidth,height=2.7cm]{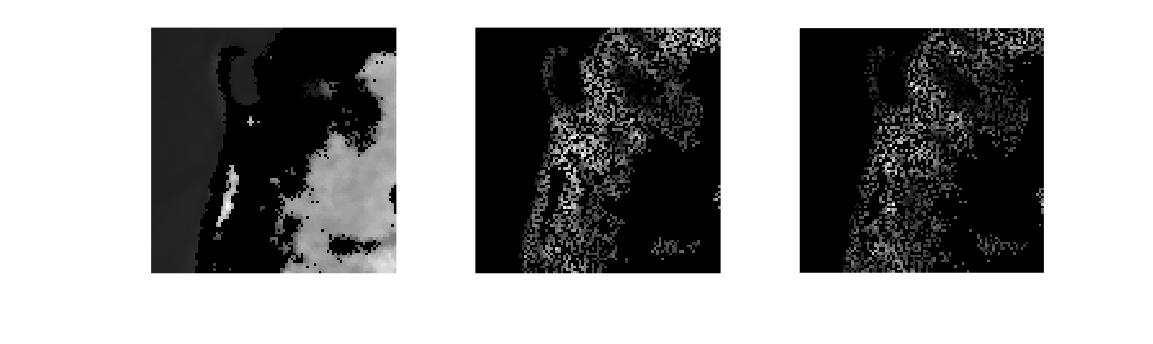}\includegraphics[width=0.55\textwidth,height=2.7cm]{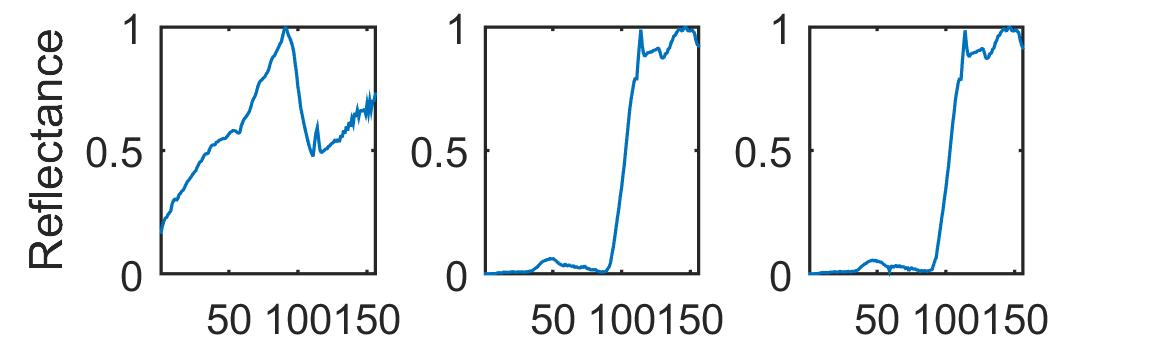} \\
\mbox{\hspace{-8mm}}\includegraphics[width=0.55\textwidth,height=2.7cm]{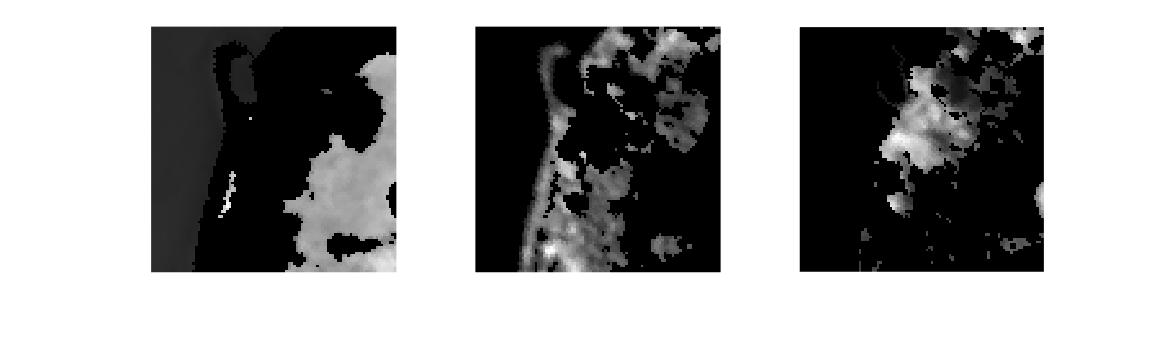}\includegraphics[width=0.55\textwidth,height=2.7cm]{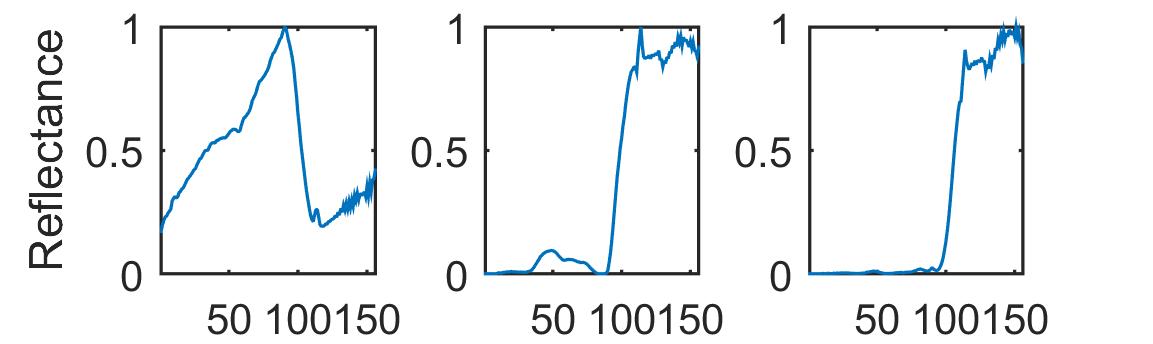} \\
\mbox{\hspace{-8mm}}\includegraphics[width=0.55\textwidth,height=2.7cm]{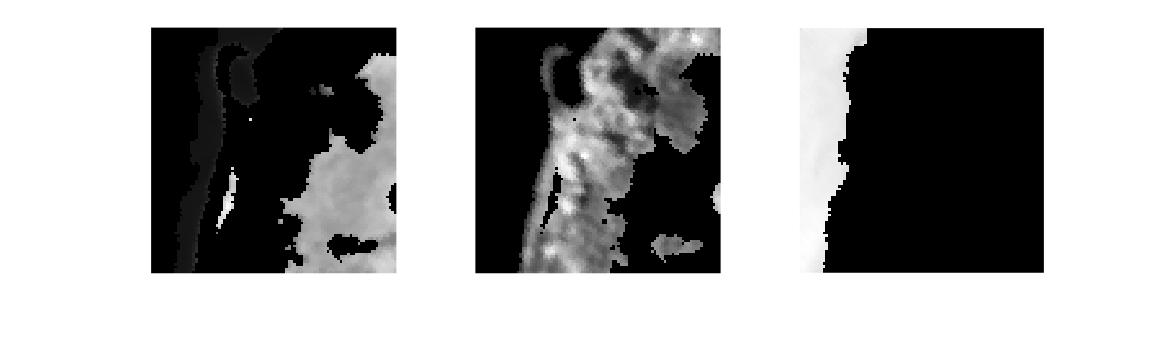}\includegraphics[width=0.55\textwidth,height=2.7cm]{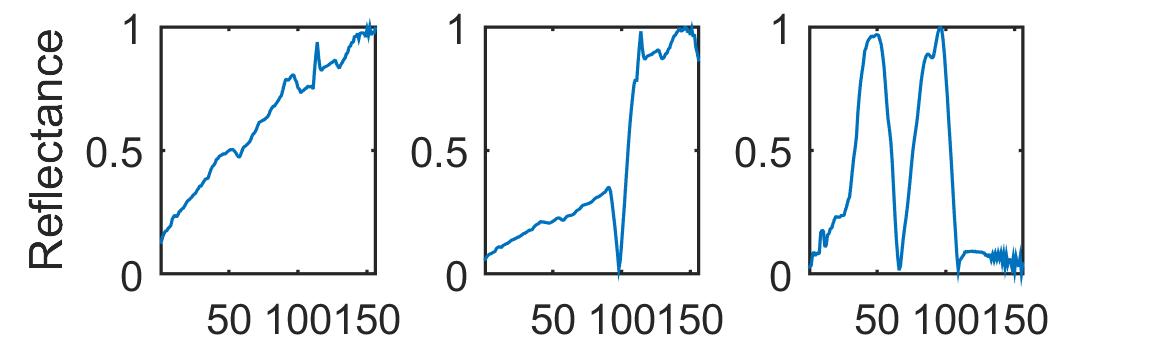} \\
\caption{Left: Rock, Tree, Water; Right: Reflectance of Rock, Tree, Water. From the top to  bottom: groundtruth, ONTD, NTD, NTD1, NMF, SN-ONMF.}\label{FiHp}
\end{figure*}

 In this example, the aim of NTD is finding three factor matrices and one core tensor. We do a minor revision on NTD, i.e., using their method to find the 1st factor matrix size of $156\times 3$ and the core tensor size of $3\times 95\times 95$, we refer it to NTD1 here. Now we use tensor methods (ONTD, NTD, NTD1) on tensor $\mathcal{A}$, then get a $156\times 3$ factor matrix $\mathbf{U}^{(1)}$ and a $3\times 95\times 95$ factor tensor $\mathcal{S}$. The $i$-th feature is obtained by hard clustering\cite{Ding06} base on the first array of tensor $\mathcal{S}$, where $i=1,2,3$. When we use matrix methods (NMF, SN-ONMF), tensor $\mathcal{A}$ should be firstly reshaped to a pixels $\times$ spectral matrix $\mathbf{A}$ whose size is $9025\times 156$. The factor matrices size of $9025\times 3$ and $3\times 156$ can be obtained by the matrix methods. After hard clustering is used on the $9025\times 3$ matrix, we can obtain the $i$-th feature image by reshaping its $i$-th column to a $95\times 95$ matrix. Worth to say, different from \cite{Pan18}, here we impose the orthogonal constraint on spectral-class matrix whose size is $156\times 3$. Note that we don't apply HOOI because the nonnegative value makes it difficult shown in unmixing.

 The groundtruth and the numerical results are displayed in Figure \ref{FiHp}.
 According to the results, our method displays a good clustering performance.
 The tree, rock and water are extracted. But for the other methods,
 they do not perform well because they are not able to separate these materials well.

\begin{table}[htbp]
  \centering
  \caption{The hyperspectral unmixing results of different methods.}\label{10}
  \begin{tabular}{|c|c|c|c|c|c|c|}
  \hline
   ~~ & ONTD  & NTD&NTD1& NMF& SN-ONMF \\
  \hline
 similarity &$\mathbf{0.9083}$&0.5674& 0.5237&0.5142 &0.8960\\
  \hline
   comp. time &$\mathbf{2.62}$s & 82.03s&  16.88s  & 13.87s& 5.47s\\
   \hline
\end{tabular}
\end{table}

To evaluate the result, in this case, we define the following similarity metric:
$$
similarity=\frac{1}{r}\sum^{r}_{i=1}\frac{<\mathbf{h}_i, \mathbf{g}_i>}{\|\mathbf{h}_i\|_2\|\mathbf{g}_i\|_2}
$$
where $\{\mathbf{h}_i\}^{r}_{i=1}$ are the extracted features, $\{\mathbf{g}_i\}^{r}_{i=1}$ are the groundtruth features. It describes the similarity of the groundtruth feature space and the computed feature space. Larger value implies a better result. The clustering similarity and computational time (in seconds) of the above results are list in Table \ref{10}.
Compare to other methods, our model shows the highest similarity and it is the fastest. One can also find that NTD takes more time than the other methods. For matrix method, although SNONMF takes 5.47s, the similarity is high at 0.8960, only less than ONTD.

\section{Conclusion}
In this paper we have studied the orthogonal nonnegative Tucker decomposition problem and developed a structured convex optimization algorithm. The convergence analysis is given. We employ ONTD on the image data sets from the real world applications including face recognition, image representation, hyperspectral unmixing. It shows a good performance.

%


%

 \section*{Appendix}

In the following, we give the proof of Theorem 1. For simplicity, from the algorithm, we give a general model as follows,
\begin{equation*}
\begin{split}
\min F(X,Y,Z,K)=\frac{1}{2}\|A-KA\|^2_F+\theta \|X\|_1 \\
\mbox{s.t.}\quad \left(
              \begin{array}{c}
                I \\
                I \\
                I \\
              \end{array}
            \right)K+\left(
                       \begin{array}{ccc}
                         -I & 0 & 0 \\
                         0 & -I & 0 \\
                         0 & 0 & -I \\
                       \end{array}
                     \right)\left(
                              \begin{array}{c}
                                X \\
                                Z \\
                                M \\
                              \end{array}
                            \right)=0,\\
                            tr(K)=J,~M^T=M,~0\preceq M\preceq I,~Z\geq0.
\end{split}
\end{equation*}
It can be written as

\begin{equation*}
\begin{split}
\min F(K,M,X,Z)=f(K)+h(M)+g(X)+\delta (Z)\\
\mbox{s.t.}\quad  \left(
              \begin{array}{c}
                I \\
                I \\
                I \\
              \end{array}
            \right)K+\left(
                       \begin{array}{ccc}
                         -I & 0 & 0 \\
                         0 & -I & 0 \\
                         0 & 0 & -I \\
                       \end{array}
                     \right)\left(
                              \begin{array}{c}
                                X \\
                                Z \\
                                M \\
                              \end{array}
                            \right)=0,
\end{split}
\end{equation*}
where $f(K)=\frac{1}{2}\|A-KA\|^2_F+f_1(K)$, and
$f_1(K)=\left\{\begin{array}{cl}
0, & \mbox{if} \ tr(K)=J;\\
+\infty, & otherwise.
\end{array}\right.$, $g(X)=\theta \|X\|_1$,
$h(M)=\left\{\begin{array}{cl}
0, & \mbox{if} \ M^T=M,~0\preceq M\preceq I;\\
+\infty. & otherwise.
\end{array}\right.$,
$
\delta(Z)=\left\{\begin{array}{cl}
0, & \mbox{if} \ Z\in \mathbb{R}_+;\\
+\infty, & otherwise.
\end{array}\right.
$.
The general model can be simply written as
\begin{equation}\label{conve1}
  \min F=f(K)+h(M)+g(X)+\delta(Z)\quad \mbox{s.t.}\quad K=X,~K=Z,~K=M.
\end{equation}
The algorithm for the general model is in the following:
\begin{equation*}
\begin{split}
&K^i=\arg\min \{f(K)+\frac{\lambda_1}{2}\|K-M^{i-1}+b^{i-1}_1\|^2_F+\frac{\lambda_2}{2}\|K-X^{i-1}+b^{i-1}_2\|^2_F+\frac{\lambda_3}{2}\|K-Z^{i-1}+b^{i-1}_3\|^2_F\};\\
&M^i=\arg\min\{h(M)+\frac{\lambda_1}{2}\|K^{i}-M+b^{i-1}_1\|^2_F \};\\
&X^i=\arg\min\{g(X)+\frac{\lambda_2}{2}\|K^{i}-X+b^{i-1}_2\|^2_F \};\\
&Z^i=\arg\min\{\delta(Z)+\frac{\lambda_3}{2}\|K^{i}-Z+b^{i-1}_3\|^2_F  \};\\
&b^i_1=b^{i-1}_1+K^{i}-M^{i};\\
&b^i_2=b^{i-1}_2+K^{i}-X^{i};\\
&b^i_3=b^{i-1}_3+K^{i}-Z^{i}.\\
\end{split}
\end{equation*}
The lagrangian can be written as
\begin{equation}\label{conv_lang}
\begin{split}
L=&f(K)+h(M)+g(X)+\delta(Z)+\frac{\lambda_1}{2}\|K-M\|^2_F+\lambda_1\langle K-M, b_1\rangle+\frac{\lambda_2}{2}\|K-X\|^2_F\\
&+\lambda_2\langle K-X, b_2\rangle+\frac{\lambda_3}{2}\|K-Z\|^2_F+\lambda_3\langle K-Z, b_3\rangle.
\end{split}
\end{equation}
\begin{Definition}
$(K^*,M^*,X^*,Z^*,b^*_1,b^*_2,b^*_3)$ is a saddle point if
\begin{equation}\label{conv_saddle}
L(K^*,M^*,X^*,Z^*,b_1,b_2,b_3)\leq L(K^*,M^*,X^*,Z^*,b^*_1,b^*_2,b^*_3)\leq L(K,M,X,Z,b^*_1,b^*_2,b^*_3)
\end{equation}
for any $(K,M,X,Z,b_1,b_2,b_3)$.
\end{Definition}
\begin{lemma}\label{lemm-saddle}
 $(K^*,M^*,X^*,Z^*)$ is a solution of problem (\ref{conve1}) if and only if there exist $b^*_1$, $b^*_2$, $b^*_3$ such that $(K^*,M^*,X^*,Z^*,b^*_1,b^*_2,b^*_3)$ is a saddle point.
\end{lemma}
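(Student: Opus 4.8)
The plan is to treat (\ref{conve1}) as a convex program with affine coupling constraints and to invoke the standard equivalence between primal optimality and the saddle-point property of its (augmented) Lagrangian (\ref{conv_lang}). First I would record that $f$, $h$, $g$ and $\delta$ are all proper, lower semicontinuous and convex: $f$ is a convex quadratic plus the indicator of the affine set $\{tr(K)=J\}$, $h$ is the indicator of the convex set $\{M=M^{T},\ 0\preceq M\preceq I\}$, $g=\theta\|\cdot\|_1$ is a norm, and $\delta$ is the indicator of the nonnegative orthant. Consequently $L(\cdot,\cdot,\cdot,\cdot,b_1,b_2,b_3)$ is convex in $(K,M,X,Z)$ for each fixed multiplier triple, while $L(K^*,M^*,X^*,Z^*,\cdot,\cdot,\cdot)$ is affine in each $b_i$. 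These two features drive the two halves of the equivalence.

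For the ``if'' direction I would argue directly from (\ref{conv_saddle}). The left inequality states that $(b^*_1,b^*_2,b^*_3)$ maximizes the affine map $b\mapsto L(K^*,M^*,X^*,Z^*,b)$, whose slopes are $\lambda_1(K^*-M^*)$, $\lambda_2(K^*-X^*)$ and $\lambda_3(K^*-Z^*)$; since the maximum over all of $\mathbb{R}^{I_n\times I_n}$ is finite, each slope must vanish, giving primal feasibility $K^*=M^*=X^*=Z^*$. At any feasible point every penalty and inner-product term in (\ref{conv_lang}) vanishes, so $L(K^*,M^*,X^*,Z^*,b^*)=F(K^*,M^*,X^*,Z^*)$ and likewise $L(K,M,X,Z,b^*)=F(K,M,X,Z)$ for any other feasible quadruple. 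The right inequality of (\ref{conv_saddle}) then yields $F(K^*,M^*,X^*,Z^*)\le F(K,M,X,Z)$ over all feasible points, so $(K^*,M^*,X^*,Z^*)$ solves (\ref{conve1}).

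For the ``only if'' direction I would produce the multipliers from convex duality. Because the coupling constraints $K=X$, $K=Z$, $K=M$ are affine and a (refined Slater) point exists for the genuinely conic constraint $0\preceq M\preceq I$ -- for instance $K=X=Z=M=\tfrac{J}{I_n}\mathbf{I}$, which is symmetric, entrywise nonnegative, has trace $J$, and satisfies $0\prec\tfrac{J}{I_n}\mathbf{I}\prec I$ strictly since $0<J<I_n$ -- a constraint qualification holds and strong duality is available. Optimality of $(K^*,M^*,X^*,Z^*)$ is then equivalent to the existence of $b^*_1,b^*_2,b^*_3$ satisfying the KKT relations $0\in\partial f(K^*)+\lambda_1 b^*_1+\lambda_2 b^*_2+\lambda_3 b^*_3$, $0\in\partial h(M^*)-\lambda_1 b^*_1$, $0\in\partial g(X^*)-\lambda_2 b^*_2$ and $0\in\partial\delta(Z^*)-\lambda_3 b^*_3$, together with feasibility. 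Evaluated at the feasible starred point these inclusions are exactly the first-order optimality conditions for the convex map $(K,M,X,Z)\mapsto L(K,M,X,Z,b^*)$, so that map is minimized there, giving the right inequality of (\ref{conv_saddle}); feasibility makes the $b$-dependent terms disappear, giving the left inequality trivially.

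I expect the ``only if'' direction to be the main obstacle, and within it the guarantee that the multipliers $b^*_i$ genuinely exist. The delicate point is that, unlike a purely polyhedral problem, here the constraint $0\preceq M\preceq I$ is conic, so I would be careful to verify a constraint qualification (rather than relying on affineness alone) in order to obtain strong duality and not merely weak duality; once the strictly feasible point above is exhibited this is secured. The remaining pieces -- extracting feasibility from affineness in $b$, and collapsing the penalty and inner-product terms at feasibility so that $L$ agrees with $F$ -- are routine and require no further hypotheses.
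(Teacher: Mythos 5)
Your proof follows essentially the same route as the paper's: the ``if'' direction extracts primal feasibility $K^*=M^*=X^*=Z^*$ from the affineness of $L$ in the multipliers and then restricts the right inequality to feasible points, and the ``only if'' direction produces multipliers satisfying the subdifferential inclusions $-\lambda_1 b^*_1-\lambda_2 b^*_2-\lambda_3 b^*_3\in\partial f(K^*)$, $\lambda_1 b^*_1\in\partial h(M^*)$, $\lambda_2 b^*_2\in\partial g(X^*)$, $\lambda_3 b^*_3\in\partial\delta(Z^*)$ and sums the resulting subgradient inequalities. The one substantive difference is that the paper simply asserts the existence of $b^*_1,b^*_2,b^*_3$ with these properties, whereas you justify it by exhibiting the strictly feasible point $K=M=X=Z=\tfrac{J}{I_n}\mathbf{I}$ (interior to the semidefinite constraint $0\preceq M\preceq I$ when $0<J<I_n$) and invoking a refined Slater condition; this closes a genuine gap in the paper's argument, since the semidefinite constraint is not polyhedral and affineness of the coupling constraints alone does not guarantee the multipliers exist. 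Your version is therefore the more complete one; otherwise the two proofs coincide.
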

\begin{proof}
$(K^*,M^*,X^*,Z^*,b^*_1,b^*_2,b^*_3)$ is a saddle point. From
$$
L(K^*,M^*,X^*,Z^*,b_1,b_2,b_3)\leq L(K^*,M^*,X^*,Z^*,b^*_1,b^*_2,b^*_3),
$$
we get, $\forall b_1, b_2, b_3$,
$$
\lambda_1\langle K^*-M^*, b_1\rangle+\lambda_2\langle K^*-X^*, b_2\rangle+\lambda_3\langle K^*-Z^*, b_3\rangle\leq \lambda_1\langle K^*-M^*, b^*_1\rangle+\lambda_2\langle K^*-X^*, b^*_2\rangle+\lambda_3\langle K^*-Z^*, b^*_3\rangle.
$$
Let $b_1=b^*_1$, $b_2=b^*_2$, $b_3=b^*_3\pm \triangle b_3$, we get $K^*=Z^*$. Similarly, $K^*=M^*$, $K^*=X^*$. Also  $\forall  (K,M,X,Z)$,
$$
L(K^*,M^*,X^*,Z^*,b^*_1,b^*_2,b^*_3)\leq L(K,M,X,Z,b^*_1,b^*_2,b^*_3),
$$
let $K=M=X=Z$,  we have
$$f(K^*)+h(M^*)+g(X^*)+\delta (Z^*)\leq f(K)+h(M)+g(X)+\delta (Z),$$
i.e., $(K^*,M^*,X^*,Z^*)$ is minimizer of $F(K,M,X,Z)$.

If $(K^*,M^*,X^*,Z^*)$ is solution of (\ref{conve1}), i.e., $K^*=M^*=X^*=Z^*$, then the left inequality of (\ref{conv_saddle}) established. Moreover, $\exists b^*_1, b^*_2, b^*_3$ such that
$$
-\lambda_1b^*_1-\lambda_2b^*_2-\lambda_3b^*_3\in \partial f(K^*),~~\lambda_1b^*_1\in \partial h(M^*),~~\lambda_2b^*_2\in \partial g(X^*),~~\lambda_3b^*_3\in \partial \delta(Z^*).
$$
For $\forall M, X, Z,$
$$
f(K)-f(K^*)\geq \nabla f(K^*)(K-K^*),~~h(M)-h(M^*)\geq \nabla h(M^*)(M-M^*),
$$
$$
g(X)-g(X^*)\geq \nabla g(X^*)(X-X^*),~~ \delta(Z)-\delta(Z^*)\geq \nabla \delta(Z^*)(Z-Z^*),
$$
i.e.,
\begin{eqnarray}\label{conve2}
\notag
f(K^*)&\leq& f(K)+\langle \lambda_1b^*_1+\lambda_2b^*_2+\lambda_3b^*_3,K-K^*\rangle,\\
h(M^*)&\leq&h(M)-\langle \lambda_1b^*_1, M-M^*\rangle,\\
\notag
g(X^*)&\leq& g(X)-\langle \lambda_2b^*_2, X-X^*\rangle,\\
\notag
\delta(Z^*)&\leq& \delta(Z)-\langle \lambda_3b^*_3, Z-Z^*\rangle,
\end{eqnarray}
the summation of above inequality (\ref{conve2}) yields the right inequality of (\ref{conv_saddle}).
\end{proof}

\begin{theorem}
$\{(K^i, M^i,X^i,Z^i)\}$ generated by algorithm from any starting point converges to a minimum of (\ref{conve1}).
\end{theorem}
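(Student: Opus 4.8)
The plan is to read (\ref{conve1}) as a two--block ADMM, with $K$ the first block and the triple $(M,X,Z)$ the second (these three decouple within the second block, so updating them simultaneously is harmless), and to obtain global convergence through the saddle--point framework already built in Lemma \ref{lemm-saddle}. First I would check that a minimizer of (\ref{conve1}) exists: feasibility forces $K=M=X=Z$ with $K=K^T$, $K\geq 0$, $\mathbf{0}\preceq K\preceq \mathbf{I}$ and $tr(K)=J$, a compact convex set on which the continuous convex objective $F$ attains its minimum. By Lemma \ref{lemm-saddle} this minimizer is accompanied by multipliers $b^*_1,b^*_2,b^*_3$ making $(K^*,M^*,X^*,Z^*,b^*_1,b^*_2,b^*_3)$ a saddle point; in particular the inclusions behind (\ref{conve2}) hold, namely $-(\lambda_1b^*_1+\lambda_2b^*_2+\lambda_3b^*_3)\in\partial f(K^*)$, $\lambda_1b^*_1\in\partial h(M^*)$, $\lambda_2b^*_2\in\partial g(X^*)$ and $\lambda_3b^*_3\in\partial\delta(Z^*)$.

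Next I would write the first--order optimality condition of each subproblem. Substituting the dual recursions $b^i_1=b^{i-1}_1+K^i-M^i$, $b^i_2=b^{i-1}_2+K^i-X^i$, $b^i_3=b^{i-1}_3+K^i-Z^i$, the three independent updates give $\lambda_1 b^i_1\in\partial h(M^i)$, $\lambda_2 b^i_2\in\partial g(X^i)$ and $\lambda_3 b^i_3\in\partial\delta(Z^i)$. For the $K$--update, eliminating $b^{i-1}_j$ through the same recursions turns its optimality condition into
\begin{equation*}
-\big(\lambda_1 b^i_1+\lambda_2 b^i_2+\lambda_3 b^i_3\big)-\big(\lambda_1(M^i-M^{i-1})+\lambda_2(X^i-X^{i-1})+\lambda_3(Z^i-Z^{i-1})\big)\in\partial f(K^i),
\end{equation*}
which isolates the ``stale'' second--block increments distinguishing this scheme from a fully implicit one.

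Then I would invoke monotonicity of the convex subdifferentials $\partial f,\partial g,\partial h,\partial\delta$: pairing each iterate inclusion with the corresponding saddle--point inclusion and adding the four resulting inequalities, while using $K^*=M^*=X^*=Z^*$ and the residual identities $b^i_j-b^{i-1}_j=K^i-(\cdot)^i$, should yield after rearrangement a descent inequality of the form
\begin{equation*}
\Phi^{i-1}-\Phi^i\ \geq\ \lambda_1\|K^i-M^i\|^2_F+\lambda_2\|K^i-X^i\|^2_F+\lambda_3\|K^i-Z^i\|^2_F+(\text{nonnegative increment terms}),
\end{equation*}
where $\Phi^i=\lambda_1\|b^i_1-b^*_1\|^2_F+\lambda_2\|b^i_2-b^*_2\|^2_F+\lambda_3\|b^i_3-b^*_3\|^2_F+\lambda_1\|M^i-M^*\|^2_F+\lambda_2\|X^i-X^*\|^2_F+\lambda_3\|Z^i-Z^*\|^2_F$ is the Lyapunov quantity measuring the distance of the multipliers and second--block variables to the saddle point. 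Summing shows $\{\Phi^i\}$ is nonincreasing and bounded, hence convergent, while the residuals $K^i-M^i,\ K^i-X^i,\ K^i-Z^i$ and the increments $M^i-M^{i-1}$, etc., are square--summable and tend to zero. Boundedness makes the whole iterate sequence bounded, so a subsequence converges; vanishing residuals give feasibility of the limit, and closedness of the subdifferential graphs passes the four inclusions to the limit, so by Lemma \ref{lemm-saddle} the limit is a saddle point and $(K^i,M^i,X^i,Z^i)$ a solution of (\ref{conve1}). Taking this limiting saddle point as the reference in $\Phi^i$, monotonicity forces $\Phi^i\to0$ and upgrades subsequential to full convergence.

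I expect the main obstacle to be pinning down the exact form of $\Phi^i$ and verifying the descent inequality with the correct constants, because the three coupled consensus constraints generate cross terms and the stale increments in the $K$--optimality condition must be absorbed carefully. This is precisely where the present model departs from \cite{Lai15} by carrying one extra splitting block, and where the admissible relaxation range $\gamma^{(n)}\in(0,(1+\sqrt{5})/2)$ would be extracted by optimizing the coefficient in the descent estimate.
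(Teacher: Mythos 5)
Your proposal follows essentially the same route as the paper's appendix proof: the same two-block ADMM reading, the same saddle-point characterization via Lemma \ref{lemm-saddle}, the same subgradient-monotonicity pairing of iterate and saddle-point inclusions, and the identical Lyapunov quantity $\Phi^i=\lambda_1\|\triangle b^i_1\|^2_F+\lambda_2\|\triangle b^i_2\|^2_F+\lambda_3\|\triangle b^i_3\|^2_F+\lambda_1\|\triangle M^i\|^2_F+\lambda_2\|\triangle X^i\|^2_F+\lambda_3\|\triangle Z^i\|^2_F$, with the stale second-block increments absorbed by exactly the extra inequalities the paper derives from the optimality of $M^{i-1},X^{i-1},Z^{i-1}$. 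The only difference is cosmetic: you state the final upgrade from subsequential to full convergence explicitly, whereas the paper stops at showing every limit point is a minimizer.
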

\begin{proof}
Let $(K^*, M^*, X^*,Z^*)$ be an optimal solution, from algorithm,
\begin{eqnarray*}
K^*&=&\arg\min L(K,M^*, X^*, Z^*,b^*_1,b^*_2,b^*_3);\\
M^*&=&\arg\min L(K^*,M, X^*, Z^*,b^*_1,b^*_2,b^*_3);\\
X^*&=&\arg\min L(K^*,M^*, X, Z^*,b^*_1,b^*_2,b^*_3);\\
Z^*&=&\arg\min L(K^*,M^*, X^*, Z,b^*_1,b^*_2,b^*_3);
\end{eqnarray*}
for any $K,M,X,Z\in \mathbb{R}^{m\times m}$. From (\ref{conve2}), we have,
\begin{eqnarray}\label{conve3}
\notag
 &f(K)-f(K^*)+\lambda_1\langle K-K^*,b^*_1\rangle+\lambda_2\langle K-K^*,b^*_2\rangle+\lambda_3\langle K-K^*,b^*_3\rangle\\
\notag
&=f(K)-f(K^*)+\lambda_1\langle K-K^*,K^*-M^*+b^*_1\rangle\\
\notag
& +\lambda_2\langle K-K^*,K^*-X^*+b^*_2\rangle+\lambda_3\langle K-K^*,K^*-Z^*+b^*_3\rangle\geq 0,\\
&h(M)-h(M^*)+\lambda_1\langle M^*-K^*-b^*_1, M-M^*\rangle \geq  0,\\
\notag
&g(X)-g(X^*)+\lambda_2\langle X^*-K^*-b^*_2, X-X^*\rangle \geq  0,\\
\notag
&\delta(Z)-\delta(Z^*)+\lambda_3\langle Z^*-K^*-b^*_3, Z-Z^*\rangle \geq  0.\\
\notag
\end{eqnarray}
Because of  the algorithm, $(K^i,M^i,X^i,Z^i)$ for any $K,M,X,Z$, we have
$$
f(K)-f(K^i)\geq \nabla f^T(K^i)(K-K^i),
$$
where $$\nabla f^T(K^i)=-\lambda_1( K^{i}-M^{i-1}+b^{i-1}_1)-\lambda_2(K^i-X^{i-1}+b^{i-1}_2)-\lambda_3(K^i-Z^{i-1}+b^{i-1}_3).$$
So,
\begin{eqnarray}\label{conve4-1}
f(K)&-&f(K^i)+\lambda_1\langle K^i-M^{i-1}+b^{i-1}_1, K-K^i\rangle\\
\notag
&+&\lambda_2\langle K^i-X^{i-1}+b^{i-1}_2, K-K^i\rangle+\lambda_3\langle K^i-Z^{i-1}+b^{i-1}_3, K-K^i\rangle\geq 0.
\end{eqnarray}
Similar, we get
\begin{eqnarray}\label{conve4-2}
\notag
h(M)-h(M^i)+\lambda_1\langle M^i-K^i-b^{i-1}_1,M-M^i\rangle&\geq& 0,\\
\notag
g(X)-g(X^i)+\lambda_2\langle X^i-K^i-b^{i-1}_2,X-X^i\rangle&\geq &0,\\
\delta(Z)-\delta(Z^i)+\lambda_3\langle Z^i-K^i-b^{i-1}_3,Z-Z^i\rangle&\geq& 0.\\
\notag
\end{eqnarray}
Let $K=K^i$ in (\ref{conve3}) and $K=K^*$ in (\ref{conve4-1}), then
$$
f(K^i)-f(K^*)+\lambda_1\langle K^i-K^*, K^*-M^*+b^*_1\rangle + \lambda_2\langle K^i-K^*, K^*-X^*+b^*_2\rangle+ \lambda_3\langle K^i-K^*, K^*-Z^*+b^*_3\rangle\geq 0,
$$
\begin{equation*}
 \begin{split}
&f(K^*)-f(K^i)+\lambda_1\langle K^*-K^i, K^i-M^{i-1}+b^{i-1}_1\rangle + \lambda_2\langle K^*-K^i, K^i-X^{i-1}+b^{i-1}_2\rangle\\
&+ \lambda_3\langle K^*-K^i, K^i-Z^{i-1}+b^{i-1}_3\rangle\geq 0.
\end{split}
\end{equation*}
The summation of above inequality is
\begin{eqnarray*}
  &\lambda_1\langle K^i-K^*, K^*-M^*+b^*_1-K^i+M^{i-1}-b^{i-1}_1\rangle
  + \lambda_2\langle K^i-K^*, K^*-X^*+b^*_2-K^i+X^{i-1}-b^{i-1}_2\rangle\\
 &+  \lambda_3\langle K^i-K^*, K^*-Z^*+b^*_3-K^i+Z^{i-1}-b^{i-1}_3\rangle\geq 0.
\end{eqnarray*}
Similarly, we get
\begin{eqnarray*}
\lambda_1 \langle M^i-M^*, M^*-K^*-b^*_1-M^i+K^{i}+b^{i-1}_1\rangle \geq 0,\\
\lambda_2 \langle X^i-X^*, X^*-K^*-b^*_2-X^i+K^{i}+b^{i-1}_2\rangle \geq 0,\\
\lambda_3 \langle Z^i-Z^*, Z^*-K^*-b^*_3-Z^i+K^{i}+b^{i-1}_3\rangle \geq 0,\\
\end{eqnarray*}
Let $\triangle K^i =K^i-K^*$, $\triangle M^i =M^i-M^*$, $\triangle X^i =X^i-X^*$, $\triangle Z^i =Z^i-Z^*$, $\triangle b^i_1 =b^i_1-b^*_1$, $\triangle b^i_2 =b^i_2-b^*_2$, $\triangle b^i_3 =b^i_3-b^*_3$, so

\begin{eqnarray}\label{conve5}
\notag
 &\lambda_1 \langle\triangle K^i, \triangle M^{i-1}-\triangle K^i-\triangle b^{i-1}_1\rangle+ \lambda_2 \langle\triangle K^i, \triangle X^{i-1}-\triangle K^i-\triangle b^{i-1}_2\rangle \\
 \notag
&+\lambda_3 \langle\triangle K^i, \triangle Z^{i-1}-\triangle K^i-\triangle b^{i-1}_3\rangle \geq 0,\\
\notag
 &\lambda_1\langle \triangle M^i, \triangle K^i-\triangle M^i+\triangle b^{i-1}_1\rangle \geq 0,\\
 &\lambda_2\langle \triangle X^i, \triangle K^i-\triangle X^i+\triangle b^{i-1}_2\rangle \geq 0,\\
\notag
&\lambda_3\langle \triangle Z^i, \triangle K^i-\triangle Z^i+\triangle b^{i-1}_3\rangle \geq 0.\\
\notag
\end{eqnarray}
Summation of the above inequalities, we get that
\begin{eqnarray*}
\lambda_1( \langle \triangle M^i-\triangle K^i, \triangle b^{i-1}_1\rangle+\langle \triangle M^i, \triangle K^{i}-\triangle M^{i}\rangle+\langle \triangle K^i, \triangle M^{i-1}-\triangle K^{i}\rangle)\\
+\lambda_2( \langle \triangle X^i-\triangle K^i, \triangle b^{i-1}_2\rangle+\langle \triangle X^i, \triangle K^{i}-\triangle X^{i}\rangle+\langle \triangle K^i, \triangle X^{i-1}-\triangle K^{i}\rangle)\\
+\lambda_3( \langle \triangle Z^i-\triangle K^i, \triangle b^{i-1}_3\rangle+\langle \triangle Z^i, \triangle K^{i}-\triangle Z^{i}\rangle+\langle \triangle K^i, \triangle Z^{i-1}-\triangle K^{i}\rangle)\geq 0,
\end{eqnarray*}
i.e.,
\begin{eqnarray*}
\lambda_1\langle \triangle M^i-\triangle K^i, \triangle b^{i-1}_1\rangle-\lambda_1\|\triangle K^i-\triangle M^i\|^2-\lambda_1 \langle \triangle M^i-\triangle M^{i-1}, \triangle K^{i}\rangle\\
 + \lambda_2\langle \triangle X^i-\triangle K^i, \triangle b^{i-1}_2\rangle-\lambda_2\|\triangle K^i-\triangle X^i\|^2-\lambda_2 \langle \triangle X^i-\triangle X^{i-1}, \triangle K^{i}\rangle\\
 + \lambda_3\langle \triangle Z^i-\triangle K^i, \triangle b^{i-1}_3\rangle-\lambda_3\|\triangle K^i-\triangle Z^i\|^2-\lambda_3 \langle \triangle Z^i-\triangle Z^{i-1}, \triangle K^{i}\rangle\geq 0.
\end{eqnarray*}

On the other hand,
\begin{eqnarray*}
\triangle b^i_1&=&\triangle b^{i-1}_1+\triangle K^i-\triangle M^i,\\
\triangle b^i_2&=&\triangle b^{i-1}_2+\triangle K^i-\triangle X^i,\\
\triangle b^i_3&=&\triangle b^{i-1}_3+\triangle K^i-\triangle Z^i.\\
\end{eqnarray*}
We get that,
\begin{eqnarray}\label{conve-lambda}
&\lambda_1 (\|\triangle b^{i-1}_1\|^2-\|\triangle b^i_1\|^2)+ \lambda_2(\|\triangle b^{i-1}_2\|^2-\|\triangle b^i_2\|^2)+\lambda_3 (\|\triangle b^{i-1}_3\|^2-\|\triangle b^i_3\|^2)\\
\notag
&=\lambda_1(-2\langle  \triangle K^i-\triangle M^i, \triangle b^{i-1}_1 \rangle-\|\triangle K^i-\triangle M^i\|^2)+\lambda_2(-2\langle \triangle K^i-\triangle X^i, \triangle b^{i-1}_2 \rangle\\
\notag
&-\|\triangle K^i-\triangle X^i\|^2)+\lambda_3(-2\langle  \triangle K^i-\triangle Z^i, \triangle b^{i-1}_3 \rangle-\|\triangle K^i-\triangle Z^i\|^2)\\
\notag
&\geq 2 \lambda_1 (\|\triangle K^i-\triangle M^i\|^2+\langle \triangle M^i-\triangle M^{i-1}, \triangle K^i\rangle)+2 \lambda_2 (\|\triangle K^i-\triangle X^i\|^2\\
\notag
&+\langle \triangle X^i-\triangle X^{i-1}, \triangle K^i\rangle)+2 \lambda_3 (\|\triangle K^i-\triangle Z^i\|^2+\langle \triangle Z^i-\triangle Z^{i-1}, \triangle K^i\rangle)\\\notag
&-\lambda_1\|\triangle K^i-\triangle M^i\|^2-\lambda_2\|\triangle K^i-\triangle X^i\|^2-\lambda_3\|\triangle K^i-\triangle Z^i\|^2\\
\notag
&=\lambda_1\|\triangle K^i-\triangle M^i\|^2+\lambda_2\|\triangle K^i-\triangle X^i\|^2+\lambda_3\|\triangle K^i-\triangle Z^i\|^2\\
\notag
&+2 \lambda_1 \langle\triangle M^i-\triangle M^{i-1},\triangle K^i\rangle+2 \lambda_2 \langle\triangle X^i-\triangle X^{i-1},\triangle K^i\rangle+2 \lambda_3 \langle\triangle Z^i-\triangle Z^{i-1},\triangle K^i\rangle.
\end{eqnarray}
Note that,
\begin{eqnarray*}
M^{i-1}&=&\arg\min_M{h(M)+\frac{\lambda_1}{2}\|K^{i-1}-M+b^{i-2}_1\|^2_F},\\
X^{i-1}&=&\arg\min_X{g(X)+\frac{\lambda_2}{2}\|K^{i-1}-X+b^{i-2}_2\|^2_F},\\
Z^{i-1}&=&\arg\min_Z{\delta(Z)+\frac{\lambda_3}{2}\|K^{i-1}-Z+b^{i-2}_3\|^2_F},
\end{eqnarray*}
so for any $M, X, Z\in \mathbb{R}^{m\times m}$,
$$
h(M)-h(M^{i-1})\geq (\nabla h(M^{i-1}))^T(M-M^{i-1}),
$$
where $\nabla h(M^{i-1})=-\lambda_1(-K^{i-1}+M^{i-1}-b^{i-2}_1)$. Therefore,
\begin{eqnarray}\label{conve6-1}
h(M)-h(M^{i-1})+\lambda_1 \langle M^{i-1}-K^{i-1}-b^{i-2}_1, M-M^{i-1}\rangle\geq 0,
\end{eqnarray}
similar,
\begin{eqnarray}\label{conve6-2}
g(X)-g(X^{i-1})+\lambda_2 \langle X^{i-1}-K^{i-1}-b^{i-2}_2, X-X^{i-1}\rangle &\geq& 0,
\end{eqnarray}
\begin{eqnarray}\label{conve6-3}
\delta(Z)-\delta(Z^{i-1})+\lambda_3 \langle Z^{i-1}-K^{i-1}-b^{i-2}_3, Z-Z^{i-1}\rangle &\geq& 0.
\end{eqnarray}
Let $M=M^i, X=X^i, Z=Z^i$ in (\ref{conve6-1}, \ref{conve6-2}, \ref{conve6-3}), then,
\begin{eqnarray}\label{conve7}
\notag
h(M^i)-h(M^{i-1})+\lambda_1\langle M^{i-1}-K^{i-1}-b^{i-2}_1, M^i-M^{i-1}\rangle &\geq & 0 ,\\
g(X^i)-g(X^{i-1})+\lambda_2\langle X^{i-1}-K^{i-1}-b^{i-2}_2, X^i-X^{i-1}\rangle &\geq & 0 ,\\
\notag
\delta(Z^i)-\delta(Z^{i-1})+\lambda_3\langle Z^{i-1}-K^{i-1}-b^{i-2}_3, Z^i-Z^{i-1}\rangle &\geq & 0.\\
\notag
\end{eqnarray}
Let $M=M^{i-1}, X=X^{i-1}, Z=Z^{i-1}$ in (\ref{conve4-2}), we have,
\begin{eqnarray}\label{conve8}
\notag
h(M^{i-1})-h(M^i)+\lambda_1\langle M^{i}-K^{i}-b^{i-1}_1, M^{i-1}-M^{i}\rangle &\geq & 0 ,\\
g(X^{i-1})-g(X^i)+\lambda_2\langle X^{i}-K^{i}-b^{i-1}_2, X^{i-1}-X^{i}\rangle &\geq & 0 ,\\
\notag
\delta(Z^{i-1})-\delta(Z^{i})+\lambda_3\langle Z^{i}-K^{i}-b^{i-1}_3, Z^{i-1}-Z^{i}\rangle &\geq & 0.\\
\notag
\end{eqnarray}
Summation of (\ref{conve7}) and (\ref{conve8}),
\begin{eqnarray}\label{conve9}
\notag
 \langle b^{i-1}_1-b^{i-2}_1, M^{i}-M^{i-1}\rangle + \langle M^{i-1}-K^{i-1}-M^{i}+K^{i}, M^i-M^{i-1}\rangle&\geq & 0 ,\\
\langle b^{i-1}_2-b^{i-2}_2, X^{i}-X^{i-1}\rangle + \langle X^{i-1}-K^{i-1}-X^{i}+K^{i}, X^i-X^{i-1}\rangle&\geq & 0  ,\\
\notag
\langle b^{i-1}_3-b^{i-2}_3, Z^{i}-Z^{i-1}\rangle + \langle Z^{i-1}-K^{i-1}-Z^{i}+K^{i}, Z^i-Z^{i-1}\rangle&\geq & 0 .
\notag
\end{eqnarray}
Also we know,
\begin{eqnarray*}
&\triangle K^i-\triangle K^{i-1}=K^i-K^{i-1},~~\triangle M^i-\triangle M^{i-1}=M^i-M^{i-1},~~\triangle X^i-\triangle X^{i-1}=X^i-X^{i-1},\\
&\triangle Z^i-\triangle Z^{i-1}=Z^i-Z^{i-1},~~b^{i-1}_1-b^{i-2}_1=\triangle K^{i-1}-\triangle M^{i-1},~~b^{i-1}_2-b^{i-2}_2=\triangle K^{i-1}-\triangle X^{i-1},\\
&b^{i-1}_3-b^{i-2}_3=\triangle K^{i-1}-\triangle Z^{i-1}.
\end{eqnarray*}
(\ref{conve9}) can be written as
\begin{eqnarray*}
\langle \triangle K^{i-1}-\triangle M^{i-1}, \triangle M^{i}-\triangle M^{i-1}\rangle+\langle \triangle K^i-\triangle K^{i-1}, \triangle M^i-\triangle M^{i-1}\rangle &\geq & \|\triangle M^i-\triangle M^{i-1}\|^2_F, \\
\langle \triangle K^{i-1}-\triangle X^{i-1}, \triangle X^{i}-\triangle X^{i-1}\rangle+\langle \triangle K^i-\triangle K^{i-1}, \triangle X^i-\triangle X^{i-1}\rangle &\geq & \|\triangle X^i-\triangle X^{i-1}\|^2_F,\\
\langle \triangle K^{i-1}-\triangle Z^{i-1}, \triangle Z^{i}-\triangle Z^{i-1}\rangle+\langle \triangle K^i-\triangle K^{i-1}, \triangle Z^i-\triangle Z^{i-1}\rangle &\geq & \|\triangle Z^i-\triangle Z^{i-1}\|^2_F. \\
\end{eqnarray*}
Because
\begin{eqnarray*}
&\langle \triangle M^{i}-\triangle M^{i-1},\triangle K^i \rangle =\langle \triangle M^i-\triangle M^{i-1}, \triangle K^i-\triangle K^{i-1}\rangle
\\&+\langle \triangle M^i-\triangle M^{i-1}, \triangle K^{i-1}-\triangle M^{i-1}\rangle
+\langle \triangle M^i-\triangle M^{i-1}, \triangle M^{i-1}\rangle,\\
&\langle \triangle X^{i}-\triangle X^{i-1},\triangle K^i \rangle =  \langle \triangle X^i-\triangle X^{i-1}, \triangle K^i-\triangle K^{i-1}\rangle
\\ &+\langle\triangle X^i-\triangle X^{i-1}, \triangle K^{i-1}-\triangle X^{i-1}\rangle
 +\langle \triangle X^i-\triangle X^{i-1}, \triangle X^{i-1}\rangle,\\
&\langle \triangle Z^{i}-\triangle Z^{i-1},\triangle K^i \rangle =\langle \triangle Z^i-\triangle Z^{i-1}, \triangle K^i-\triangle K^{i-1}\rangle
\\&+ \langle \triangle Z^i-\triangle Z^{i-1}, \triangle K^{i-1}-\triangle Z^{i-1}\rangle
 +\langle \triangle Z^i-\triangle Z^{i-1}, \triangle Z^{i-1}\rangle,\\
\end{eqnarray*}
then
\begin{eqnarray*}
\langle \triangle M^i-\triangle M^{i-1},\triangle K^i\rangle &\geq& \|\triangle M^i-\triangle M^{i-1}\|^2_F+\langle \triangle M^{i-1}, \triangle M^{i}-\triangle M^{i-1}\rangle,\\
\langle \triangle X^i-\triangle X^{i-1},\triangle K^i\rangle &\geq& \|\triangle X^i-\triangle X^{i-1}\|^2_F+\langle \triangle X^{i-1}, \triangle X^{i}-\triangle X^{i-1}\rangle,\\
\langle \triangle Z^i-\triangle Z^{i-1},\triangle K^i\rangle &\geq& \|\triangle Z^i-\triangle Z^{i-1}\|^2_F+\langle \triangle Z^{i-1}, \triangle Z^{i}-\triangle Z^{i-1}\rangle.\\
\end{eqnarray*}
So, (\ref{conve-lambda}) can be written as
\begin{eqnarray*}
&\lambda_1 (\|\triangle b^{i-1}_1\|^2_F-\|\triangle b^{i}_1\|^2_F)+\lambda_2 (\|\triangle b^{i-1}_2\|^2_F-\|\triangle b^{i}_2\|^2_F)+\lambda_3 (\|\triangle b^{i-1}_3\|^2_F-\|\triangle b^{i}_3\|^2_F)\\
&\geq \lambda_1 \|\triangle K^i-\triangle M^i\|^2_F+\lambda_2 \|\triangle K^i-\triangle X^i\|^2_F+\lambda_3 \|\triangle K^i-\triangle Z^i\|^2_F\\
& +2\lambda_1\|\triangle M^i-\triangle M^{i-1}\|^2_F+2\lambda_1\langle\triangle M^{i-1},\triangle M^i-\triangle M^{i-1}\rangle\\
&+2\lambda_2\|\triangle X^i-\triangle X^{i-1}\|^2_F+2\lambda_2\langle\triangle X^{i-1},\triangle X^i-\triangle X^{i-1}\rangle\\
&+2\lambda_3\|\triangle Z^i-\triangle Z^{i-1}\|^2_F+2\lambda_3\langle\triangle Z^{i-1},\triangle Z^i-\triangle Z^{i-1}\rangle\\
&=\lambda_1 \|\triangle K^i-\triangle M^i\|^2_F+\lambda_1 \|\triangle M^{i}-\triangle M^{i-1}\|^2_F+\lambda_1(\|\triangle M^i\|^2_F-\|\triangle M^{i-1}\|^2_F)\\
&+\lambda_2 \|\triangle K^i-\triangle X^i\|^2_F+\lambda_2 \|\triangle X^{i}-\triangle X^{i-1}\|^2_F+\lambda_2(\|\triangle X^i\|^2_F-\|\triangle X^{i-1}\|^2_F)\\
&+\lambda_3 \|\triangle K^i-\triangle Z^i\|^2_F+\lambda_3 \|\triangle Z^{i}-\triangle Z^{i-1}\|^2_F+\lambda_3(\|\triangle Z^i\|^2_F-\|\triangle Z^{i-1}\|^2_F).
\end{eqnarray*}
Therefore,
\begin{eqnarray*}
&(\lambda_1 \|\triangle b^{i-1}_1\|^2_F+\lambda_2\|\triangle b^{i-1}_2\|^2_F+\|\triangle b^{i-1}_3\|^2_F+\lambda_1 \|\triangle M^{i-1}\|^2_F+\lambda_2 \|\triangle X^{i-1}\|^2_F+\lambda_3 \|\triangle Z^{i-1}\|^2_F)\\
&-(\lambda_1 \|\triangle b^{i}_1\|^2_F+\lambda_2\|\triangle b^{i}_2\|^2_F+\|\triangle b^{i}_3\|^2_F+\lambda_1 \|\triangle M^{i}\|^2_F+\lambda_2 \|\triangle X^{i}\|^2_F+\lambda_3 \|\triangle Z^{i}\|^2_F)\\
&\geq \lambda_1(\|\triangle K^i-\triangle M^i\|^2_F+\|\triangle M^i-\triangle M^{i-1}\|^2_F)+\lambda_2(\|\triangle K^i-\triangle X^i\|^2_F+\|\triangle X^i-\triangle X^{i-1}\|^2_F)\\
&+\lambda_3(\|\triangle K^i-\triangle Z^i\|^2_F+\|\triangle Z^i-\triangle Z^{i-1}\|^2_F)\geq 0.
\end{eqnarray*}
The sequence $\{ \lambda_1 \|\triangle b^i_1\|^2_F+\lambda_2 \|\triangle b^i_2\|^2_F+\lambda_3 \|\triangle b^i_3\|^2_F+\lambda_1 \|\triangle M^i\|^2_F+\lambda_2 \|\triangle X^i\|^2_F+\lambda_3 \|\triangle Z^i\|^2_F\}$ is non-increasing and convergent.

Also $\{K^i\}, \{M^i\}, \{X^i\}, \{Z^i\}, \{b^i_1\}, \{b^i_2\}, \{b^i_3\} $ are bounded and they have limit points. We  get that $\lim_{i\rightarrow \infty} \|K^i-M^i\|^2_F=0$, $\lim_{i\rightarrow \infty} \|K^i-X^i\|^2_F$, $\lim_{i\rightarrow \infty} \|K^i-Z^i\|^2_F=0$.

Therefore, let $(\tilde{K}, \tilde{M}, \tilde{X}, \tilde{Z}, \tilde{b}_1, \tilde{b}_2, \tilde{b}_3)$ be limit point, i.e., there exists subsequence such that
$$
\lim_{j\rightarrow \infty} (K^{i_j},M^{i_j},X^{i_j},Z^{i_j},b^{i_j}_1, b^{i_j}_2, b^{i_j}_3)=(\tilde{K}, \tilde{M}, \tilde{X}, \tilde{Z}, \tilde{b}_1, \tilde{b}_2, \tilde{b}_3).
$$
In the following, we prove that $(\tilde{K},\tilde{M},\tilde{X},\tilde{Z})$ is minimum, i.e.,
$$
f(\tilde{K})+h(\tilde{M})+g(\tilde{X})+\delta(\tilde{Z})=f(K^*)+h(M^*)+g(X^*)+\delta(Z^*).
$$
Note that $(K^*,M^*,X^*,Z^*)$ is a saddle point, from lemma (\ref{lemm-saddle}), $K^*=M^*=X^*=Z^*$,
\begin{eqnarray*}
&f(K^*)+h(M^*)+g(X^*)+\delta(Z^*)\leq f(K^{i_j})+h(M^{i_j})+g(X^{i_j})+\delta (Z^{i_j})\\
&+\frac{\lambda_1}{2}\|K^{i_j}-M^{i_j}\|^2_F+\frac{\lambda_2}{2}\|K^{i_j}-X^{i_j}\|^2_F+\frac{\lambda_3}{2}\|K^{i_j}-Z^{i_j}\|^2_F\\
&+\lambda_1 \langle K^{i_j}-M^{i_j}, b^*_1\rangle+\lambda_2 \langle K^{i_j}-X^{i_j}, b^*_2\rangle+\lambda_3 \langle K^{i_j}-Z^{i_j}, b^*_3\rangle,
\end{eqnarray*}
$j\rightarrow \infty$, we get that
\begin{equation}\label{conve10-1}
f(K^*)+h(M^*)+g(X^*)+\delta (Z^*)\leq f(\tilde{K})+h(\tilde{M})+g(\tilde{X})+\delta(\tilde{Z}).
\end{equation}
On the other hand, let $K=K^*, M=M^*, X=X^*, Z=Z^*$ in equation (\ref{conve4-1}) and (\ref{conve4-2}), we get
\begin{eqnarray*}
f(K^*)+h(M^*)+g(X^*)+\delta (Z^*)\geq   f(K^{i_j})+h(M^{i_j})+g(X^{i_j})+\delta (Z^{i_j})\\
-\lambda_1\langle K^*-K^{i_j}, K^{i_j}-M^{i_j-1}+b^{i_j-1}_1\rangle
-\lambda_2\langle K^*-K^{i_j}, K^{i_j}-X^{i_j-1}+b^{i_j-1}_2\rangle\\
-\lambda_3\langle K^*-K^{i_j}, K^{i_j}-Z^{i_j-1}+b^{i_j-1}_3\rangle
-\lambda_1\langle M^*-M^{i_j}, M^{i_j}-K^{i_j}-b^{i_j-1}_1\rangle\\
-\lambda_2\langle X^*-X^{i_j}, X^{i_j}-K^{i_j}-b^{i_j-1}_2\rangle-\lambda_3\langle Z^*-Z^{i_j}, Z^{i_j}-K^{i_j}-b^{i_j-1}_3\rangle \\
= f(K^{i_j})+h(M^{i_j})+g(X^{i_j})+\delta (Z^{i_j})-\lambda_1\langle M^{i_j}-K^{i_j}, b^{i_j-1}_1\rangle\\
-\lambda_1\langle K^*-K^{i_j}, K^{i_j}-M^{i_j-1}\rangle
-\lambda_1\langle M^*-M^{i_j}, M^{i_j}-K^{i_j}\rangle\\
-\lambda_2\langle X^{i_j}-K^{i_j}, b^{i_j-1}_2\rangle
-\lambda_2\langle K^*-K^{i_j}, K^{i_j}-X^{i_j-1}\rangle\\
-\lambda_2\langle X^*-X^{i_j}, X^{i_j}-K^{i_j}\rangle
-\lambda_3\langle Z^{i_j}-K^{i_j}, b^{i_j-1}_3\rangle\\
-\lambda_3\langle K^*-K^{i_j}, K^{i_j}-Z^{i_j-1}\rangle
-\lambda_3\langle Z^*-Z^{i_j}, Z^{i_j}-K^{i_j}\rangle,
\end{eqnarray*}
let $j\rightarrow \infty$, then
\begin{equation}\label{conve10-2}
f(K^*)+h(M^*)+g(X^*)+\delta (Z^*)\geq f(\tilde{K})+h(\tilde{M})+g(\tilde{X})+\delta(\tilde{Z}).
\end{equation}
Combine with (\ref{conve10-1}),
$$
f(K^*)+h(M^*)+g(X^*)+\delta (Z^*)= f(\tilde{K})+h(\tilde{M})+g(\tilde{X})+\delta(\tilde{Z}).
$$
Hence, the limit point is minimum of (\ref{conve1}).
\end{proof}

\section*{Acknowledgment}
 This work was supported in part by the HKRGC GRF 12306616, 12200317, 12300218, 12300519, and  CRF C1007-15G. J.Pan acknowledges the support by the European Research Council (ERC starting grant no. 679515). 


%



%
%
%

\bibliographystyle{ieeetr}

\bibliography{ref}


%
%
%
%
%




\end{document}